\definecolor{DarkGreen}{rgb}{0.1,0.5,0.1}
\definecolor{DarkRed}{rgb}{0.5,0.1,0.1}
\definecolor{DarkBlue}{rgb}{0.1,0.1,0.5}
\definecolor{Gray}{rgb}{0.2,0.2,0.2}
\lstdefinestyle{mystyle}{
    commentstyle=\color{DarkBlue},
    keywordstyle=\color{DarkRed},
    numberstyle=\tiny\color{Gray},
    stringstyle=\color{DarkGreen},
    basicstyle=\footnotesize,
    breakatwhitespace=false,         
    breaklines=true,                 
    captionpos=b,                    
    keepspaces=true,                 
    numbers=left,                    
    numbersep=5pt,                  
    showspaces=false,                
    showstringspaces=false,
    showtabs=false,                  
    tabsize=2
}
\def\draft{1}
\def\submit{0}
    \def\ShowAuthNotes{1}
    \def\ShowAuthNotes{0}
\newcommand{\forsubmit}[1]{#1}
\newcommand{\forreals}[1]{}
\newcommand{\forreals}[1]{#1}
\newcommand{\forsubmit}[1]{}
\newcommand{\authnote}[2]{{ \footnotesize \bf{\color{DarkRed}[#1's Note:
{\color{DarkBlue}#2}]}}}
\newcommand{\authnote}[2]{}
\newtheorem{theorem}{Theorem}[section]
\newtheorem{lemma}[theorem]{Lemma}
\newtheorem{proposition}[theorem]{Proposition}
\theoremstyle{definition}
\newtheorem{definition}[theorem]{Definition}
\newtheorem{assumption}[theorem]{Assumption}
\newtheoremstyle{example_contd}
{\topsep} {\topsep}%
{}
{}
{\bfseries}
{.}
{1em}
{\thmname{#1} \thmnumber{ #2}\thmnote{#3} (continued)}
\theoremstyle{example_contd}
\newcommand{\chapterref}[1]{\hyperref[ch:#1]{Chapter~\ref{ch:#1}}}
\newcommand{\claimref}[1]{\hyperref[claim:#1]{Claim~\ref{claim:#1}}}
\newcommand{\corollaryref}[1]{\hyperref[cor:#1]{Corollary~\ref{cor:#1}}}
\newcommand{\definitionref}[1]{\hyperref[def:#1]{Definition~\ref{def:#1}}}
\newcommand{\equationref}[1]{\hyperref[eq:#1]{Equation~\ref{eq:#1}}}
\newcommand{\factref}[1]{\hyperref[fact:#1]{Fact~\ref{fact:#1}}}
\newcommand{\figureref}[1]{\hyperref[fig:#1]{Figure~\ref{fig:#1}}}
\newcommand{\tableref}[1]{\hyperref[tab:#1]{Table~\ref{tab:#1}}}
\newcommand{\itemref}[1]{\hyperref[item:#1]{Item~(\ref{item:#1})}}
\newcommand{\lemmaref}[1]{\hyperref[lem:#1]{Lemma~\ref{lem:#1}}}
\newcommand{\propref}[1]{\hyperref[prop:#1]{Proposition~\ref{prop:#1}}}
\newcommand{\propositionref}[1]{\hyperref[prop:#1]{Proposition~\ref{prop:#1}}}
\newcommand{\remarkref}[1]{\hyperref[rem:#1]{Remark~\ref{rem:#1}}}
\newcommand{\sectionref}[1]{\hyperref[sec:#1]{Section~\ref{sec:#1}}}
\newcommand{\theoremref}[1]{\hyperref[thm:#1]{Theorem~\ref{thm:#1}}}
\newcommand{\Psymb}{\mathbb{P}}
\DeclareMathOperator*{\ProbOp}{\Psymb r}
\renewcommand{\Pr}{\ProbOp}
\newcommand{\Prob}[1]{\Pr\left\{ #1 \right\}}
\renewcommand{\hat}{\widehat}
\newcommand{\cD}{{\cal D}}
\newcommand{\cI}{{\cal I}}
\newcommand{\cS}{{\cal S}}
\newcommand{\cT}{{\cal T}}
\newcommand{\cX}{{\cal X}}
\newcommand{\cY}{{\cal Y}}
\newcommand{\cZ}{{\cal Z}}
\renewcommand{\leq}{\leqslant}
\renewcommand{\le}{\leqslant}
\renewcommand{\geq}{\geqslant}
\renewcommand{\ge}{\geqslant}
\newcommand{\ignore}[1]{}
\renewcommand{\epsilon}{\varepsilon}
\newcommand{\remove}[1]{}
\def\sc{\mathbin{;}}
\newcommand{\bE}{\mathbb{E}}
\newcommand{\bN}{\mathbb{N}}
\newcommand{\bP}{\mathbb{P}}
\newcommand{\bR}{{\mathbb R}}
\newcommand{\lambdasp}{\lambda_*^+}
\definecolor{DarkGreen}{rgb}{0.1,0.5,0.1}
\definecolor{DarkRed}{rgb}{0.5,0.1,0.1}
\definecolor{DarkBlue}{rgb}{0.1,0.1,0.5}
\definecolor{Gray}{rgb}{0.2,0.2,0.2}
\title{Performative Risk Control: Calibrating Models for Reliable Deployment under Performativity}
\author[1]{Victor Li}
\author[2]{Baiting Chen}
\author[3]{Yuzhen Mao}
\author[1]{Qi Lei}
\author[4]{Zhun Deng}
\affil[1]{New York University}
\affil[2]{University of California, Los Angeles}
\affil[3]{Simon Fraser University}
\affil[4]{University of North Carolina at Chapel Hill}
\date{}
\begin{document}

\maketitle

\begin{abstract}
Calibrating blackbox machine learning models to achieve risk control is crucial to ensure reliable decision-making. A rich line of literature has been studying how to calibrate a model so that its predictions satisfy explicit finite-sample statistical guarantees under a \textit{fixed}, \textit{static}, and unknown data-generating distribution. However, prediction-supported decisions may influence the outcome they aim to predict, a phenomenon named \textit{performativity} of predictions, which is commonly seen in social science and economics. In this paper, we introduce \textit{Performative Risk Control}, a framework to calibrate models to achieve risk control under performativity with provable theoretical guarantees. Specifically, we provide an iteratively refined calibration process, where we ensure the predictions are improved and risk-controlled throughout the process. We also study different types of risk measures and choices of tail bounds. Lastly, we demonstrate the effectiveness of our framework by numerical experiments on the task of predicting credit default risk. To the best of our knowledge, this work is the first one to study statistically rigorous risk control under performativity, which will serve as an important safeguard against a wide range of strategic manipulation in decision-making processes. 
\end{abstract}

\section{Introduction}\label{sec:intro}

We have entered an era characterized by the ubiquitous deployment of increasingly complex models, such as large language models (LLMs) with billions of parameters \citep{achiam2023gpt,team2023gemini,team2024gemini}. These models play important roles in our daily lives, informing us, shaping our opinions, and deciding allocation of societal resources. However, in most scenarios, these models can only be treated as a blackbox, either because they are too complex to understand or simply because their details are kept private by companies. Given their influence in today's society and their blackbox nature, it is urgent to derive new tools to ensure reliable deployment.

A recent line of work \citep{angelopoulos2022ltt,bates2021distribution,angelopoulos2022conformal} has been investigating methods to calibrate the predictions of blackbox machine learning models, generally known as (conformal) risk control. The goal is to ensure reliable deployment by satisfying explicit, finite-sample statistical guarantees (either with high probability or marginally) for controlling risk. The introduced frameworks are lightweight, agnostic to the data-generating distribution, and do not require model refitting. Specifically, in (conformal) risk control, a blackbox model $f$ is given to us, and we need to post-process it using calibration data in order to make our final predictions. The calibration process is governed by a low-dimensional parameter $\lambda$ (e.g., a threshold). For example, \cite{bates2021distribution} and \cite{angelopoulos2022conformal} study false negative rate (FNR) control in tumor segmentation, where $f:\cX\mapsto [0,1]^{d\times d}$ takes an image $x$ as input and outputs scores for all the $d\times d$ pixels in image $x$. For a given $\lambda$, one can produce $\cT_\lambda(x):=\{i: f(x)_i \ge 1-\lambda\}$ to include all the pixels with "high" scores. The aim is to control the expected loss:
$$\bE_{(x,y)\sim\cD}\left[1-\frac{|y\cap \cT_\lambda(x)|}{|y|}\right],$$ where label $y$ is the set of pixels truly containing poly segments, $f(x)_i$ is the $i$-th coordinate of $f(x)$, and $\cD$ is the underlying data-generating distribution.

Nevertheless, in many applications, predictions of machine learning models impact the outcome to predict, a phenomenon known as \textit{performativity} \citep{perdomo2020performative}. For instance, a bank's loan policies may impact the purchasing patterns of the population, which in turn change the features used by the bank in predicting default risk, thus completing a cyclical relationship where predictions and data evolve together. So far, the important topic of risk control on blackbox models under performativity eludes the literature. In light of this, we \textit{\textbf{initate}} the study of \textit{Performative Risk Control} (PRC).

\textbf{Formalizing our goal.} We post-process the outputs of the pre-trained model $f(x)$ in $\cY$ to generate predictions $\cT_\lambda(x)\in \cY'$ indexed by a scalar  threshold $\lambda$. Here, $\cT_\lambda$ could either be standard predictions, i.e., $\cY'=\cY$, or prediction sets, i.e., $\cY'=2^{\cY}$. In this paper, we mainly study the expected risk. However, since we further consider the performativity of predictions, the measure of our interest is mainly the \textit{performative expected risk} defined as
$$R(\lambda):=\bE_{(x,y)\sim\cD(\lambda)}\left[ \ell (y, \cT_\lambda(x))\right],$$
where $\cD(\lambda)$ is the distribution induced by the decision threshold $\lambda$ and is typically unknown. In Sec.~\ref{sec:ext}, we also discuss a broader class of measures beyond expected risk. By carefully setting the parameter $\lambda$, we control a user-chosen error rate, regardless of the quality of
$f$. Specifically, our goal is to calibrate $f$ on a \textit{calibration} dataset $\cI_{\text{cal}}$ (specified later in Sec.~\ref{subsec:procedure}) and achieve the following:
\begin{definition}\label{def:prc}
Let $\hat\lambda\in \Lambda$ be the threshold obtained by calibrating on $\cI_{\text{cal}}$. We say that risk $R(\hat\lambda)$ is $(\alpha,\delta)$-performative-risk-controlled if 
$$\bP(R(\hat\lambda)\le \alpha)\ge 1-\delta,$$
\end{definition}
where the randomness is taken on $\cI_{\text{cal}}$. Here, $\alpha\ge 0$ and $\delta\in (0,1)$ are both specified by the user.

\begin{figure}
    \centering
    \includegraphics[width=\linewidth]{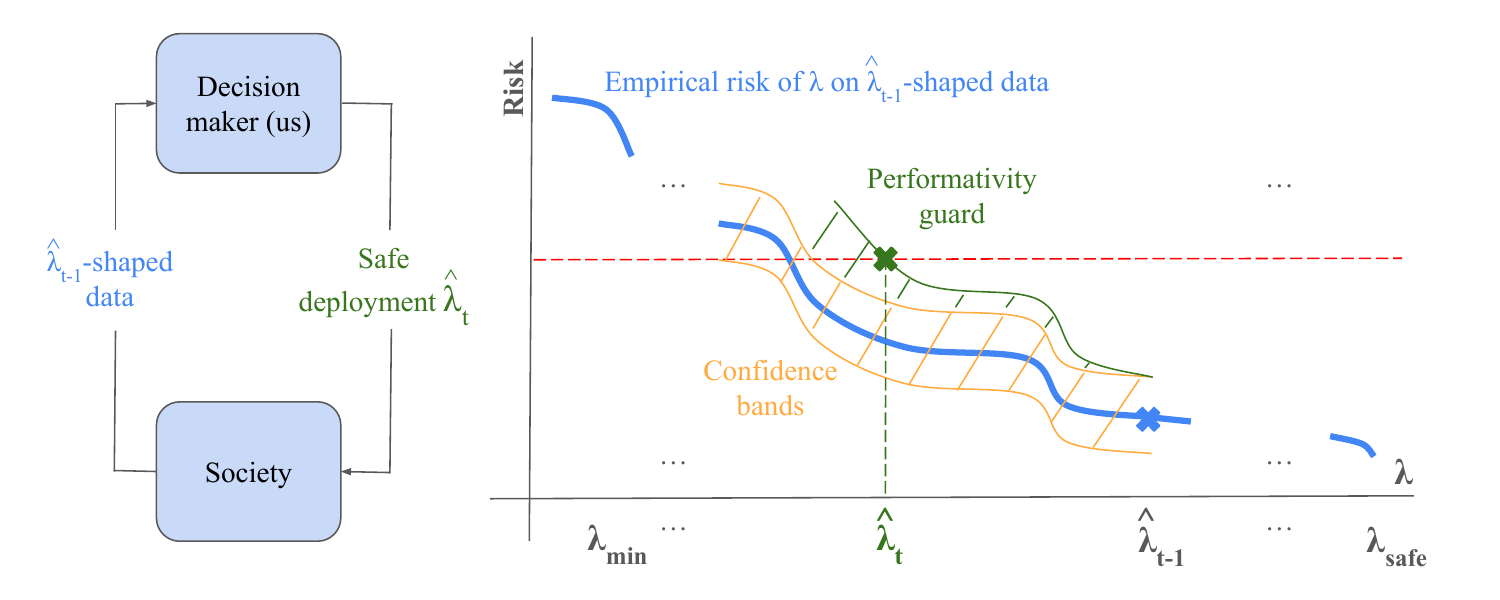}
    \caption{An overview of the Performative Risk Control (PRC) framework. \textbf{Left}: We, the decision maker, use samples from past deployments to determine a $\hat\lambda_t$ that is safe to deploy in the performative environment. \textbf{Right}: We do so by defensively choosing a $\hat\lambda_t$, careful to consider generalization error due to sampling and performative error due to deploying in the new distribution induced by $\hat\lambda_t$, as further explained in Sec.~\ref{sec:prc}.}
    \label{fig:main}
\end{figure}

\textbf{Our contribution.} Our central contribution is to introduce an iteratively refined procedure to select risk-controlling decision thresholds in a performative environment, as illustrated informally in Fig.~\ref{fig:main}. Our framework greatly generalizes the previous line of work on risk control for a static distribution to a dynamic and model-dependent distribution. In particular, our work can handle strategic manipulation of input distributions in decision-processes. Our framework satisfies explicit finite-sample statistical guarantees, and we present experimental results that highlight its practical utility. To the best of our knowledge, we are the first to study risk control under performativity, and we believe our work will serve as an important safeguard in a wide range of applications in social science and economics.

\subsection{Related Work}

\textbf{Performative prediction.} Follow-up works on performative prediction, first pioneered by \citet{perdomo2020performative}, have extended the framework to various settings. \citet{miller2021outside} provides conditions for which performatively optimal model parameters can be found, while \citet{brown2022performative} considers the history of model deployments when modeling performative distribution shifts. \citet{jagadeesan2022regret} and \citet{chen2023performative} cast performativity into a bandit problem, with the assumption of additional knowledge through access to the samples. In these settings, the goal is to find a setting of parameters that in some way minimizes an expected loss and obtain a performative optimal point $\theta_{\text{PO}} = \arg\min_{\theta} \; \mathbb{E}_{z \sim \mathcal{D}(\theta)} \, \ell(z; \theta).$ However, due to the unknown $\cD(\theta)$, this task is intractable, so much focus has been devoted to studying performative stability, i.e., finding $\theta_{\text{PS}} = \arg\min_{\theta} \; \mathbb{E}_{z \sim \mathcal{D}(\theta_{\text{PS}})} \, \ell(z; \theta).$ In our setting, our objective is instead to \textit{control} the expected loss, i.e., find the most aggressive setting of the threshold $\hat\lambda$ while guaranteeing $R(\hat\lambda)$ is risk controlled. To the best of our knowledge, risk control in the performative setting has yet to be explored. For an overview on performative prediction, \citet{hardt2023performative} serves as a solid reference.

\textbf{Distribution-free uncertainty quantification.} Conformal prediction aims to endow black-box models with rigorous, finite-sample statistical guarantees \citep{vovk2005algorithmic, shafer2008atutorial,angelopoulos2021gentle}. Conformal risk control is an extension to this framework that handles a more general class of loss functions \citep{angelopoulos2022ltt,angelopoulos2022conformal}. These works are concerned with controlling the expected risk at some threshold $\alpha$; further work by \citet{snell2022quantile} extends this framework to more general quantile risks.

\textbf{Risk control under distribution shift.} The canonical conformal prediction procedure relies on the assumption that the calibration and test data are i.i.d., or more generally, \textit{exchangeable}. Because this assumption often does not hold in real-world environments, statistical extensions of conformal prediction have been developed to incorporate distribution shift. \citet{tibshirani2019conformal} deals with covariate shift, while \citet{gibbs2021adaptive} handles distribution shift in the online setting by continuously re-estimating a single parameter to achieve exact coverage. \citet{barber2023conformal} generalizes conformal prediction to account for nonexchangeability through a procedure that weighs points in the calibration data higher if they are closer in distribution to the test data. Other works focus on making coverage robust to a set of possible test distributions, e.g., those in an $f$-divergence ball around the training distribution \citep{cauchois2024robust} and those due to feature alterations by strategic agents after model deployment \citep{csillag2024strategicconformalprediction}.
\section{Setup}\label{sec:setup}
This section introduces our setting and goals and their significance more formally.

\subsection{Setting and Notation}
For $K\in \bN_+$, we use $[K]$ to denote $\{1,2,\cdots,K\}$. For simplicity, we denote $\cZ=\cX\times \cY$ and $z=(x,y)$. Recall that we have a prediction function indexed by a threshold $\lambda$, where for a fixed $\lambda\in\Lambda$, $\cT_{\lambda}$ is a function mapping $\cX$ to $\cY'$. Here, $\cY'$ could either be $\cY$ or $2^\cY$. We further use $\ell(z,\lambda)$ short for $\ell(y,\cT_\lambda(x))$.

\noindent\textbf{Monotonicity.} Throughout the paper, we consider $\ell(z,\lambda)$ to be \textit{non-increasing} with respect to $\lambda$ for any $z\in\cZ$. This is mainly following and motivated by previous work in standard risk control \citep{bates2021distribution}. On one hand, one can think of the case that $\cT_\lambda$ is a set-valued function as in \citet{bates2021distribution,angelopoulos2022conformal}, that 
$$\lambda_1 < \lambda_2\Rightarrow \cT_{\lambda_1}(x)\subseteq \cT_{\lambda_2}(x).$$
A larger set includes more candidates and provides a larger tolerance region that will decrease the loss values of loss functions such as the classical one corresponding to tolerance region, i.e., $\ell(y,\cS)=\bm{1}\{y\notin\cS\}$, which satisfies $\cS\subset \cS'\Rightarrow \ell(y,\cS)\ge \ell(y,\cS').$ On the other hand, when $\cT_\lambda(x)$ is a standard prediction, larger $\lambda$ indicates a more conservative but safer decision. For the example of credit loans, $\cT_\lambda(x)=\bm{1}\{f(x)\leq1-\lambda\}$ may indicate whether an applicant gets credit ($1$ means they got it). With increasing $\lambda$, the credit firm narrows the set of credit users, reducing the firm's risk and leading to a decreasing loss function $\ell(z,\lambda)$ w.r.t the threshold $\lambda$.

\noindent\textbf{Setup for $\Lambda$.} We consider the case where there exists a \textit{\textbf{safe}} threshold $\lambda_{\text{safe}}$ such that $\ell(z,\lambda_{\text{safe}})=0$, and we consider $\Lambda=[\lambda_{\text{min}},\lambda_{\text{safe}}]$ for $\lambda_{\text{min}}\in\bR$ \footnote{Our setting follows \cite{bates2021distribution}, in which the loss value becomes trivially too large at $\lambda_{\text{min}}$ and too small at $\lambda_\text{safe}$. This holds generally in risk control literature.}. For the example of $\cT_\lambda(x)=\bm{1}\{f(x)\leq1-\lambda\}$ above, we can choose $\lambda_{\text{safe}}=1$ and in the example of detecting tumor segmentation in \cite{bates2021distribution}, we can choose $\lambda_{\text{safe}}$ large enough to make $\cS_{\lambda_{\text{safe}}}$ include all possible labels, so that $\ell(y,\cS_{\lambda_{\text{safe}}})=\bm{1}\{y\notin\cS_{\lambda_{\text{safe}}}\}=0$.

\subsection{Our Desiderata and Significance}
As mentioned in the introduction, our main goal is to select an \textit{appropriate} threshold $\hat\lambda$ by using the calibration dataset $\cI_{\text{cal}}$, such that $$\bP(R(\hat\lambda)\le \alpha)\ge 1-\delta.$$
However, there might be multiple choices for $\hat\lambda$ and how to define ``appropriate" under performativity is nuanced. We further illustrate our extra desiderata here. 

\noindent\textbf{Achieve user-specified conservativeness.} Our setting prescribes that a larger $\lambda$ leads to being more conservative. As an extreme case, choosing a $\lambda$ large enough to include all the pixels in the tumor segmentation example \citep{bates2021distribution} can guarantee zero loss, thus satisfying $(\alpha,\delta)$-performative-risk-controlled for all $\alpha\ge 0$. But this safe-guard choice of $\lambda$ is not useful in practice since it sacrifices too much utility in providing useful class information. This illustration demonstrates that a user of this framework may further desire to choose a $\lambda$ that is less conservative. To formalize this, we wish the final choice of $\hat\lambda$ also satisfies
$$R(\hat\lambda)\ge\alpha-\Delta\alpha$$
for a small user-specified $\Delta \alpha$. In particular, we show that our framework allows users to choose $\Delta \alpha$ such that $\Delta\alpha\rightarrow 0$ as $n\rightarrow \infty$. This means that our framework can provide a $\hat\lambda$ also satisfying a tight lower bound guarantee for the performative risk, i.e., $\hat\lambda$ is the optimal and least conservative in an \textit{asymptotic} sense.

\noindent\textbf{Safe at anytime.} Similar to the literature of performative prediction \citep{perdomo2020performative}, our algorithm is an iterative algorithm (see Sec.\ref{sec:prc} for details). On a high-level summary, our algorithm guarantees that the $\hat\lambda_t$ updated in each iteration is improving, i.e., $\hat\lambda_t<\hat\lambda_{t-1}$, until we find our final $\hat\lambda$ that satisfies the user-specified conservativeness. However, we don't want any of the $\hat\lambda_t$'s to be chosen too aggressively, resulting in a threshold too small to be risk-controlled. This is especially important in policy-making, where a policy's risk must be closely monitored as its usage is incrementally increased. Thus, we further require that the trajectory of the $\hat\lambda_t$'s is safe for all $t\in [T]$:
$$R(\hat\lambda_t)\le \alpha,$$
where $T$ is the iteration of return in our algorithm.
\section{Performative Risk Control}\label{sec:prc}
In this section, we describe main results for PRC. We provide an iterative procedure to compute a series of safe thresholds. Specifically, we demonstrate that our framework provides statistical guarantees on the ability to maintain risk control at user-specified level $\alpha$ throughout the iterative process, as well as obtain $\hat\lambda_T$ achieving user-specified tightness at $\alpha-\Delta\alpha$.

\subsection{Our Procedure}\label{subsec:procedure}

To give a high-level summary, we provide an iterative procedure to obtain less conservative thresholds at every iteration. Specifically, not until the process is terminated, at time $t\ge 1$, we make progress by choosing a less conservative threshold $\hat\lambda_t$ such that $\hat\lambda_t< \hat\lambda_{t-1}$. Meanwhile, we ensure that $R(\hat\lambda_t)\le \alpha$ for all $t\in[T]$ with high probability. Eventually, our procedure will end up with a $\hat\lambda_T$ that achieves user-specified tight risk control. 

For user-specified $\delta, \alpha, \Delta\alpha$, the PRC procedure is outlined below. We start with setting $\lambda_0=\lambda_{\text{safe}}$ and then do the following process:

\begin{tcolorbox}[title=Performative Risk Control (PRC)]

For $t\ge 1$, we sample $\cI^{t}_{\text{cal}}=\{(x_{t-1,i},y_{t-1,i})\}_{i=1}^{n_{t-1}}\in\cX^{n_{t-1}}\times\cY^{n_{t-1}}$ i.i.d. drawn from $\cD(\hat \lambda_{t-1})$ (here, $\hat\lambda_0=\lambda_0$) and calculate 
$$\hat\lambda_t=\min\Big\{\inf \{\lambda\in \Lambda\mid V(\hat\lambda_{t-1},\lambda,\delta)\le \alpha\},\hat\lambda_{t-1}\Big\}.$$
If $\hat\lambda_t< \hat\lambda_{t-1}-\Delta\lambda$ for a progress measure $\Delta\lambda$, which means $\hat\lambda_t$ makes big enough progress over $\hat\lambda_{t-1}$, then repeat the process with $t+1$. Otherwise, return $\hat\lambda_t$.
\end{tcolorbox}

We specify details of $V$ and $\Delta\lambda$ later. Notice that the calibration dataset $\cI_{\text{cal}}$ in Def.~\ref{def:prc} is the union of calibration sets throughout the process, i.e., $\cI_{\text{cal}}=\cup_t\cI^t_{\text{cal}}$.  The formal and complete algorithm is listed in Alg.~\ref{alg:main}.

\subsection{Technical Details} 
In this subsection, we spell out details regarding various parameters and functions in our procedure. Before that, we need to state our key assumption.  Intuitively, if the distribution shift of $\cD(\lambda)$ between implementing different $\lambda$'s is too dramatic, it will be hard to achieve performative risk control. Thus, we impose our key distributional assumptions below, which is similar to the one proposed in classical performative prediction \citep{perdomo2020performative} but more general.


\begin{definition}[$(\gamma,p,g)$-sensitivity]
Let $g: \mathcal{Z}\rightarrow\mathbb{R}$ be a deterministic function, and denote $\mathcal{D}_g(\lambda)$ as the induced distribution of $g(z)$ where $z\sim\mathcal{D}(\lambda)$. A distribution map $\mathcal{D}(\cdot)$ is $(\gamma,p,g)$-sensitive if for all $\lambda_1,\lambda_2\in\Lambda$,
$$W_p(\mathcal{D}_g(\lambda_1), \mathcal{D}_g(\lambda_2))\leq \gamma|\lambda_1-\lambda_2|$$
    where $W_p$ denotes the $p$-Wasserstein distance.
\label{definition:sensitivity}
\end{definition}

The following assumption applies $(\gamma,p,g)$-sensitivity to our setting. Intuitively, it means that the distribution of loss functions arising from samples drawn from similar thresholds are also similar.

\begin{assumption}[Lipschitz Distribution Mapping] \label{assumption:sensitivity}
We assume that $\mathcal{D}(\cdot)$ is $(\gamma,p,\ell(\cdot,\lambda))$-sensitive for all $\lambda\in\Lambda$.
\end{assumption}
We remark here that the distributional assumption in \cite{perdomo2020performative} is a special case of our assumption, since it could be viewed as assuming $\cD(\cdot)$ is $(\tilde\gamma,1,\ell(\cdot,\lambda))$-sensitive for a Lipchitz loss function $\ell(z,\lambda)$ w.r.t $z$ and an appropriately chosen $\tilde\gamma$ in our setting. A more detailed discussion is deferred to App.~\ref{app:sensitivity}.

Now, we are ready to state further technical details. Without loss of generality, we can consider $n_t=n$ for all $t \ge 0$ since we can denote $n$ as the minimum of $\{n_t\}_t$ presented in our procedure, and all the results below still hold.

\noindent\textbf{Choice of $V$.} We denote $\hat R_n(\lambda,\lambda'):=\frac{1}{n}\sum_{i=1}^n \ell(z_i(\lambda),\lambda')$ to be the empirical mean on the loss of samples $\{z_i(\lambda)\}_{i=1}^n \overset{\text{i.i.d.}}{\sim} \mathcal{D}(\lambda)$ when evaluated against the threshold $\lambda'$. Thus, $\cI^t_{\text{cal}}$ could also be denoted as $\{z_i(\hat\lambda_t)\}_{t=1}^{n}$. We further denote $R(\lambda,\lambda'):=\bE_{z\sim \cD(\lambda)}\ell(z,\lambda')$. 

To construct $V$, we require knowledge of a \textit{confidence width} $c(n,\delta')$ defining pointwise confidence bounds $\hat R^\pm_n(\lambda,\lambda',\delta'):= \hat R_n(\lambda,\lambda') \pm c(n,\delta')$ that satisfy the following property for $\hat\lambda_{t-1}$ and $\lambda'\geq\hat\lambda_{t}$ for all $t\geq1$ encountered in the procedure:
\begin{equation}
\label{eq:pointwise-bound}
\mathbb{P}\left(\hat R_n^-(\hat\lambda_{t-1},\lambda',\delta') \leq  R(\hat\lambda_{t-1},\lambda') \leq \hat R^+_n(\hat\lambda_{t-1},\lambda',\delta') \right) \geq 1-\delta'.
\end{equation}
We discuss choices for $c(n,\delta')$--including derivations from Hoeffding's inequality, Bernstein's inequality, Hoeffding-Bentkus's inequality, and the central limit theorem--in App.~\ref{app:bounds}. We are further often interested in the risk profile of $R(\hat\lambda_{t},\cdot)$, whose difference from $R(\hat\lambda_{t-1},\cdot)$ can be bounded via sensitivity. We call this difference the \textit{performative error}, or the error that arises from evaluating on two different distributions. In a nutshell, we hope to update $\hat\lambda_{t-1}\rightarrow\hat\lambda_{t}$ by accounting for increases in the risk due to generalization and performative error, which leads to the following form:
\[
V(\hat\lambda_{t-1},\lambda,\delta):=\underbrace{\hat R_n(\hat\lambda_{t-1},\lambda)}_{\text{empirical loss}} +
\underbrace{c(n,\delta/\tilde T)}_{\text{confidence width}} +
\underbrace{\tau(\hat\lambda_{t-1}-\lambda)}_{\text{performativity guard}}. 
\]
Here, $\tilde T$ is an upper bound on the number of iterations required in our procedure, not to be confused with $T$ and discussed in (c). Meanwhile, the user-chosen parameter $\tau>0$ guards against performative effects; we require $\tau\geq \gamma$. While $\gamma$ is typically unknown, one can estimate its potential range or magnitude based on domain knowledge of the problem at hand to then choose a high enough $\tau$.

\noindent\textbf{Choice of $\Delta\lambda$.} We choose 
$\Delta\lambda=\frac{1}{2\tau}\left(\Delta \alpha-2c(n,\delta/\tilde T)\right).$
$\Delta\lambda$ represents the minimum decrease in $\lambda$ required for the procedure to continue iterating. We require $\Delta\lambda>0$ to ensure that our procedure returns in a finite number of iterations.

\noindent\textbf{Number of iterations $T$ vs. guaranteed convergence $\tilde T$.} Because we require $\hat\lambda_t < \hat\lambda_{t-1}-\Delta\lambda$ and the interval $[\lambda_\text{min},\lambda_\text{safe}]$ is finite, our algorithm terminates in at most $\lceil(\lambda_\text{safe}-\lambda_\text{min}) / \Delta\lambda \rceil$ iterations. Hence, by choosing $\tilde T\geq \lceil(\lambda_\text{safe}-\lambda_\text{min}) / \Delta\lambda \rceil$, we can guarantee convergence, i.e. $T\leq \tilde T$ always. Further, note the cyclical dependence between $\tilde T$ and $\Delta\lambda$. We can find solutions $(\tilde T, \Delta\lambda)\in \bN\times\bR$ as follows: for a given $\tilde T$, we solve $\Delta\lambda= \frac{\Delta \alpha-2c(n,\delta/\tilde T)}{2\tau}$ and check if $\Delta\lambda \geq (\lambda_\text{safe}-\lambda_\text{min}) / \tilde T$. If there are no such solutions $(\tilde T, \Delta\lambda)$, the algorithm cannot guarantee that its sequentially produced solutions are safe; in this case, we just return $\lambda_\text{safe}$.

\noindent\textbf{Details about $\Delta\alpha$.} 
Suppose we let $\tilde T, n\rightarrow \infty$ with $\tilde T=Cn^r$ (for some constant $C$ and $r\geq1/2$) and $\Delta\lambda=(\lambda_\text{safe}-\lambda_\text{min})/\tilde T$. Then, $\Delta\alpha=O(\ln(n)/\sqrt{n})$. We leave this derivation to App.~\ref{app:asymptotics}.

We summarize all the discussions above with the following detailed and complete algorithm in Alg.~\ref{alg:main}.

\begin{algorithm}
\caption{Performative Risk Control}
\begin{algorithmic}[1] 
\Require $\alpha$, $\Delta\alpha$, $\delta$, $n$, loss $\ell$, distribution mapping $\mathcal{D}(\cdot)$, $\lambda_\text{min}$, $\lambda_\text{safe}$, safety parameter $\tau$.
\Ensure Output $\hat\lambda$
\State Initialize $\lambda_0\gets\lambda_\text{safe}$
\State Jointly solve $(\tilde T, \Delta\lambda)\in \bN\times\bR$ s.t. $2\tau\Delta\lambda=\Delta\alpha-2c(n,\delta/\tilde T)$ and $\Delta\lambda\geq(\lambda_\text{safe}-\lambda_\text{min})/\tilde T$. Break ties by choosing the solution with the minimum $\tilde T$. If there are no solutions, return $\lambda_\text{safe}$.\label{alg:joint-solve}
\For{$t = 1$ to $\tilde T$}
    \State Receive samples $\{z_{t-1,i}\}_{i=1}^n \overset{\text{i.i.d.}}{\sim} \mathcal{D}(\hat\lambda_{t-1})$ where $\hat\lambda_0=\lambda_0$
    \State Set $\hat\lambda_t\gets\min\Big\{\inf \{\lambda\in \Lambda\mid V(\hat\lambda_{t-1},\lambda,\delta)\le \alpha\},\hat\lambda_{t-1}\Big\}$ \label{alg:line-sett}
    \If{ $\hat\lambda_{t}\geq\hat\lambda_{t-1}-\Delta\lambda$ }
        \State \label{alg:return} \Return $\hat\lambda_{t}$
    \EndIf
\EndFor
\end{algorithmic}
\label{alg:main}
\end{algorithm}

\subsection{Theoretical Guarantees}
In this subsection, we state our main theoretical result. By the following theorem, we demonstrate how our procedure can realize the promise of achieving user-specified conservativeness and being safe and reliable at anytime.

\begin{theorem}[Risk control of PRC]
Under Assumption~\ref{assumption:sensitivity}, where the distribution mapping $\mathcal{D}(\cdot)$ is $(\gamma,1,\ell(\cdot,\lambda))$-sensitive for all $\lambda\in\Lambda$, if the loss function $\ell(z,\lambda)$ is continuous in $\lambda$ for all $z$, $\tau\geq \gamma$, and the initial joint solve of $(\tilde T,\Delta\lambda)$ produces at least one value, Algorithm \ref{alg:main} guarantees that with probability $1-\delta$, the following three conditions are met simultaneously:
(i) \textbf{Safety in the iterative process:} $R(\hat\lambda_{t-1},\hat\lambda_{t})\leq\alpha$ for $1\leq t\leq T$.
(ii) \textbf{Safety at anytime: } $R(\hat\lambda_t)\leq\alpha$ for $0\le t\le  T$.
(iii) \textbf{Tightness of $\hat \lambda_T$}: $R(\hat\lambda_T)\geq\alpha-\Delta\alpha$ if $T\ge 1$. 
\label{thm:risk-control}
\end{theorem}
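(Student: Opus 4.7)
The plan is to establish all three claims on a common high-probability event. First, I would define $E$ to be the event that the pointwise confidence bound \eqref{eq:pointwise-bound} holds simultaneously at every relevant pair $(\hat\lambda_{t-1},\lambda')$ across the (at most) $\tilde T$ iterations, each applied with tolerance $\delta/\tilde T$. A union bound gives $\bP(E)\ge 1-\delta$, and all three conclusions are proved conditional on $E$. Two structural facts drive the argument: (a) continuity of $V(\hat\lambda_{t-1},\cdot,\delta)$ in its second argument, which follows from the continuity of $\ell(z,\lambda)$ in $\lambda$, so the infimum defining $\hat\lambda_t$ is attained and, whenever $\hat\lambda_t\in(\lambda_\text{min},\hat\lambda_{t-1})$, satisfies $V(\hat\lambda_{t-1},\hat\lambda_t,\delta)=\alpha$; and (b) the Kantorovich--Rubinstein consequence of Assumption~\ref{assumption:sensitivity} with $p=1$, which yields $|R(\lambda_1,\lambda)-R(\lambda_2,\lambda)|\le\gamma|\lambda_1-\lambda_2|$ for any fixed $\lambda$.

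Next I would prove (i) and (ii) jointly by induction on $t$ on the event $E$. The base case is $\hat\lambda_0=\lambda_\text{safe}$, giving $R(\hat\lambda_0)=0\le\alpha$. For the inductive step, split on whether $\hat\lambda_t=\hat\lambda_{t-1}$ (in which case both (i) and (ii) at time $t$ reduce to the inductive hypothesis) or $\hat\lambda_t<\hat\lambda_{t-1}$. In the latter subcase, continuity and the infimum definition give $V(\hat\lambda_{t-1},\hat\lambda_t,\delta)\le\alpha$, so the upper confidence bound yields
\[
R(\hat\lambda_{t-1},\hat\lambda_t)\le\hat R_n(\hat\lambda_{t-1},\hat\lambda_t)+c(n,\delta/\tilde T)=V(\hat\lambda_{t-1},\hat\lambda_t,\delta)-\tau(\hat\lambda_{t-1}-\hat\lambda_t)\le\alpha-\tau(\hat\lambda_{t-1}-\hat\lambda_t),
\]
which proves (i); then sensitivity yields $R(\hat\lambda_t)\le R(\hat\lambda_{t-1},\hat\lambda_t)+\gamma(\hat\lambda_{t-1}-\hat\lambda_t)\le\alpha+(\gamma-\tau)(\hat\lambda_{t-1}-\hat\lambda_t)\le\alpha$ since $\tau\ge\gamma$. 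Intuitively, the performativity guard $\tau(\hat\lambda_{t-1}-\lambda)$ inside $V$ is engineered to absorb exactly the shift from $\mathcal{D}(\hat\lambda_{t-1})$ to $\mathcal{D}(\hat\lambda_t)$.

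For (iii), assume $T\ge 1$ and split by termination scenario. If $\hat\lambda_T<\hat\lambda_{T-1}$ (with $\hat\lambda_T>\lambda_\text{min}$, the typical case), continuity together with the infimum definition force $V(\hat\lambda_{T-1},\hat\lambda_T,\delta)=\alpha$. Chaining the lower confidence bound and sensitivity gives
\[
R(\hat\lambda_T)\ge R(\hat\lambda_{T-1},\hat\lambda_T)-\gamma(\hat\lambda_{T-1}-\hat\lambda_T)\ge \alpha-2c(n,\delta/\tilde T)-(\tau+\gamma)(\hat\lambda_{T-1}-\hat\lambda_T)\ge\alpha-\Delta\alpha,
\]
using $\gamma\le\tau$, the return condition $\hat\lambda_{T-1}-\hat\lambda_T\le\Delta\lambda$, and the defining identity $\Delta\alpha=2\tau\Delta\lambda+2c(n,\delta/\tilde T)$. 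If instead $\hat\lambda_T=\hat\lambda_{T-1}$, then $V(\hat\lambda_{T-1},\lambda,\delta)>\alpha$ on a left neighborhood of $\hat\lambda_{T-1}$; continuity forces $V(\hat\lambda_{T-1},\hat\lambda_{T-1},\delta)\ge\alpha$, hence $\hat R_n(\hat\lambda_{T-1},\hat\lambda_{T-1})\ge\alpha-c(n,\delta/\tilde T)$, and the lower confidence bound gives $R(\hat\lambda_T)\ge\alpha-2c(n,\delta/\tilde T)\ge\alpha-\Delta\alpha$ (the performativity term vanishes since $\hat\lambda_{T-1}-\hat\lambda_T=0$).

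The main obstacle I anticipate is justifying that the pointwise bound~\eqref{eq:pointwise-bound} applies at the data-dependent choice $\lambda'=\hat\lambda_t$: strictly speaking, one needs a version that is uniform over the admissible range of $\lambda'$, which is implicit in the stated property of $c(n,\delta')$ and is typically secured via monotonicity of $\ell$ in $\lambda$ together with a covering/union-bound argument. A secondary technicality is the boundary case $\hat\lambda_T=\lambda_\text{min}$, where the infimum does not force $V=\alpha$; this either calls for restricting the tightness claim to $\hat\lambda_T>\lambda_\text{min}$ or is absorbed into the observation that if even $\lambda_\text{min}$ is not risk-binding, the tightness statement is vacuously meaningful.
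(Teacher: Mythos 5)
Your proposal is correct and follows essentially the same route as the paper: a union bound over at most $\tilde T$ iterations at level $\delta/\tilde T$ each, the Kantorovich--Rubinstein consequence of $(\gamma,1,\ell)$-sensitivity to show the $\tau$ guard absorbs the performative shift, induction for (i)--(ii), and the same two-case termination analysis for (iii) yielding $\alpha-2\tau\Delta\lambda-2c(n,\delta/\tilde T)$. The one obstacle you flag --- validity of the pointwise bound at the data-dependent $\hat\lambda_t$ --- is exactly what the paper's Proposition~\ref{prop:prc-core} (restating Theorem~A.1 of \citet{bates2021distribution}) resolves, using monotonicity and continuity to reduce the random infimum to the fixed deterministic threshold $\lambda_*=\inf\{\lambda'\in\Lambda: R(\hat\lambda_{t-1},\lambda')\le\alpha\}$, so no covering argument is needed.
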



\section{Extension}
\label{sec:ext}

In this section, we discuss how PRC can be extended to handle quantile-based risk measures, which is commonly used in quantify tail risk and widely used in mathematical finance. In particular, both conditional variance-at-risk (CVaR) and expected risk belong to quantile-based risk measures. Denote the cumulative distribution function (CDF) of losses for $z\sim\cD(\lambda)$ and evaluation threshold $\lambda'$ as $F(w \mathbin{;} \lambda,\lambda') := \Prob(\ell(z,\lambda')\leq w)$. Further, recall that the inverse of CDF $F$ is defined as $F^{-1}(p):=\inf\{ x: F(x) \geq p \}$, which leads to the following definition.

\begin{definition}[Quantile-based risk measure]\label{def:quantile-risk-measure}
Let $\psi(p)$ be a weighting function such that $\psi(p)\geq 0$ and $\int_0^1 \psi(p) dp=1$. The quantile-based risk measure defined by $\psi$ is
$$R_{\psi}(F ) = \int_0^1 \psi(p)F^{-1}(p) \, dp$$
\end{definition}

From now on, we refer to the quantile-based risk as $R_\psi(\lambda,\lambda'):=R_\psi(F(\cdot\sc\lambda,\lambda'))$. The analogy to the expected risk case is made clear when $\psi(p)=1$. Indeed,
$$
R_{\psi=1}(\lambda,\lambda'):= \int_0^1 F^{-1}(p \sc \lambda, \lambda') \, dp = \bE_{z\sim\mathcal{D}(\lambda)} \ell(z,\lambda')=R(\lambda,\lambda')
$$

Other values of $\psi(p)$ allow us to control the $\beta$-VaR (e.g., the 90th percentile of losses) and $\beta$-CVaR (e.g., the average of the worst $10\%$ of losses). To extend Thm.~\ref{thm:risk-control}, we need to construct a confidence width $c(n,\delta')$. To do so, note that we have knowledge of the empirical loss CDF of the samples from iteration $t-1$ evaluated against any threshold $\lambda'$: $$\hat F_n(w\sc\hat\lambda_{t-1},\lambda'):=\frac{1}{n}\sum_{i=1}^n \mathbf{1}\{\ell(z_i(\hat\lambda_{t-1}),\lambda') \leq w\}.$$

We follow the technique in \cite{snell2022quantile} and \cite{zollo2023prompt} to create an LCB $\hat F_{L,n}(\cdot \mathbin{;} \hat\lambda_{t-1},\lambda',\delta')$ and UCB $\hat F_{U,n}(\cdot \mathbin{;} \hat\lambda_{t-1},\lambda',\delta')$ based on the empirical CDF, where
$$\mathbb{P}\left( \hat F_{L,n}(w \mathbin{;} \hat\lambda_{t-1},\lambda',\delta')\leq  F(w \mathbin{;} \hat\lambda_{t-1},\lambda')\leq \hat F_{U,n}(w \mathbin{;} \hat\lambda_{t-1},\lambda',\delta')   \right) \geq 1-\delta' \quad \forall w$$
Next, similar to how $R_\psi(\lambda,\lambda')$ is constructed from $F(\cdot\sc\lambda,\lambda')$, we construct $\hat R^-_{\psi,n}(\hat\lambda_{t-1},\lambda',\delta')$, $\hat R_{\psi,n}(\hat\lambda_{t-1},\lambda')$, and $\hat R^+_{\psi,n}(\hat\lambda_{t-1},\lambda',\delta')$ from $\hat F_{U,n}(\cdot \mathbin{;} \hat\lambda_{t-1},\lambda',\delta')$, $\hat F_{n}(\cdot \mathbin{;} \hat\lambda_{t-1},\lambda',\delta')$, and $\hat F_{L,n}(\cdot \mathbin{;} \hat\lambda_{t-1},\lambda',\delta')$, respectively. Note that the risk measure's lower bound $\hat R^-_{\psi,n}(\hat\lambda_{t-1},\lambda',\delta')$ corresponds to the CDF upper bound $\hat F_{U,n}(\cdot \mathbin{;} \hat\lambda_{t-1},\lambda',\delta')$ and vice versa (see App.~\ref{app:quantile-ext-proofs}). The bounds on risk measure $R_\psi(\hat\lambda_{t-1},\lambda')$ satisfy the following:

$$
\mathbb{P}\left(\hat R^-_{\psi,n}\psi(\hat\lambda_{t-1},\lambda',\delta') \leq R_\psi(\hat\lambda_{t-1},\lambda') \leq \hat R^+_{\psi,n}(\hat\lambda_{t-1},\lambda',\delta') \right) \geq 1-\delta'
$$

Let $c(n,\delta',\lambda')=\max\{ 
\hat R^+_{\psi,n}(\hat\lambda_{t-1},\lambda',\delta')
-
\hat R_{\psi,n}(\hat\lambda_{t-1},\lambda'),
\hat R_{\psi,n}(\hat\lambda_{t-1},\lambda')
-
\hat R^+_{\psi,n}(\hat\lambda_{t-1},\lambda',\delta')
\}$. Finally, we compute the confidence width $c(n,\delta')$ using the technique described in App.~\ref{app:bounds}. We are now ready to extend Thm.~\ref{thm:risk-control} to quantile-based risk measures.

\begin{theorem}[Quantile risk control of PRC]\label{thm:quantile}
In Alg.~\ref{alg:main}, replace the confidence width with the one derived above. Further, replace $\hat R_n(\hat\lambda_{t-1},\lambda)$ with $\hat R_{\psi,n}(\hat\lambda_{t-1},\lambda)$ in the definition of $V$ in line~\ref{alg:line-sett}. Let $u,v\in[1,\infty]$ with $1/u+1/v=1$. If the following four conditions hold: (i) the distribution mapping is $(\gamma,u,\ell(\cdot,\lambda))$-sensitive for all $\lambda\in\Lambda$, (ii) the loss function $\ell(z,\lambda)$ is continuous in $\lambda$ for all $z$, (iii) $\tau \geq \gamma \left[ \int_0^1 |\psi(p)|^vdp \right]^{1/v}$, and (iv) the initial joint solve of $(\tilde T,\Delta\lambda)$ produces at least one value; then, this modified version of Alg.~\ref{alg:main} guarantees that with probability $1-\delta$, the quantile risk $R_\psi$ of the iterates $\hat\lambda$ satisfies the same three guarantees as those in Thm.~\ref{thm:risk-control}.
\end{theorem}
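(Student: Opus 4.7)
The plan is to reduce Theorem 4.2 to the same structural argument that proves Theorem 3.2, modulo one genuinely new analytic step: translating the $(\gamma,u,\ell(\cdot,\lambda))$-sensitivity assumption into a Lipschitz bound on $R_\psi(\cdot,\lambda')$ in its first argument with effective constant $\tilde\gamma:=\gamma\bigl[\int_0^1|\psi(p)|^v dp\bigr]^{1/v}$. Once this surrogate-Lipschitz bound is in hand, the hypothesis $\tau\geq\tilde\gamma$ plays exactly the role $\tau\geq\gamma$ plays in Theorem 3.2, so the three guarantees will follow from essentially the same chain of inequalities with $R$ replaced by $R_\psi$.

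First I would derive the Lipschitz bound. Fix $\lambda'$ and apply sensitivity with $g=\ell(\cdot,\lambda')$; the standard quantile representation of the $u$-Wasserstein distance for real-valued distributions yields
\[
\Bigl[\int_0^1\bigl|F^{-1}(p\sc\lambda_1,\lambda')-F^{-1}(p\sc\lambda_2,\lambda')\bigr|^u\,dp\Bigr]^{1/u}=W_u\bigl(\mathcal{D}_{\ell(\cdot,\lambda')}(\lambda_1),\mathcal{D}_{\ell(\cdot,\lambda')}(\lambda_2)\bigr)\leq\gamma|\lambda_1-\lambda_2|.
\]
Since $R_\psi(\lambda,\lambda')=\int_0^1\psi(p)F^{-1}(p\sc\lambda,\lambda')\,dp$, H\"older's inequality with the conjugate pair $(u,v)$ gives
\[
|R_\psi(\lambda_1,\lambda')-R_\psi(\lambda_2,\lambda')|\leq\Bigl[\int_0^1|\psi(p)|^v dp\Bigr]^{1/v}\cdot\gamma|\lambda_1-\lambda_2|=\tilde\gamma|\lambda_1-\lambda_2|.
\]

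Next I would set up the union bound. The construction of $\hat R_{\psi,n}^\pm$ from the CDF bands $\hat F_{L,n},\hat F_{U,n}$ (valid at level $1-\delta'$ per \citet{snell2022quantile} and \citet{zollo2023prompt}) gives per-iteration validity $\mathbb{P}(\hat R_{\psi,n}^-\leq R_\psi\leq\hat R_{\psi,n}^+)\geq 1-\delta'$; choosing $\delta'=\delta/\tilde T$ and unioning over the at-most-$\tilde T$ iterations produces an event $\mathcal{E}$ of probability $\geq 1-\delta$ on which every UCB and LCB holds. On $\mathcal{E}$ I would then mirror Theorem 3.2. For safety in the iterates (i), the selection rule for $\hat\lambda_t$ together with the UCB gives $R_\psi(\hat\lambda_{t-1},\hat\lambda_t)\leq\hat R_{\psi,n}(\hat\lambda_{t-1},\hat\lambda_t)+c(n,\delta/\tilde T)\leq\alpha-\tau(\hat\lambda_{t-1}-\hat\lambda_t)\leq\alpha$. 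For safety at anytime (ii), applying the Step 1 bound to transport from the old distribution to the new one,
\[
R_\psi(\hat\lambda_t)\leq R_\psi(\hat\lambda_{t-1},\hat\lambda_t)+\tilde\gamma(\hat\lambda_{t-1}-\hat\lambda_t)\leq\alpha-(\tau-\tilde\gamma)(\hat\lambda_{t-1}-\hat\lambda_t)\leq\alpha.
\]
For tightness (iii), continuity of $\ell(z,\cdot)$ makes $V$ continuous and its monotonicity in the second argument forces $V(\hat\lambda_{T-1},\hat\lambda_T,\delta)\geq\alpha$ at termination; chaining the LCB with the Step 1 bound and the stopping condition $\hat\lambda_{T-1}-\hat\lambda_T\leq\Delta\lambda=(\Delta\alpha-2c)/(2\tau)$ yields $R_\psi(\hat\lambda_T)\geq\alpha-2c-(\tau+\tilde\gamma)\Delta\lambda\geq\alpha-\Delta\alpha$, using $\tilde\gamma\leq\tau$ in the last step.

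The main obstacle is Step 1: one must marry the real-line identity $W_u^u=\int_0^1|F_1^{-1}-F_2^{-1}|^u$ with $L^u$--$L^v$ duality on $[0,1]$ to harvest the exact tight constant, and this is precisely what forces the hypothesis to feature the conjugate exponent $v$ rather than $u$ itself. Everything else --- the union bound, continuity and monotonicity of $V$, and the algebraic bookkeeping pitting $\tau$ against $\tilde\gamma$ --- is a near-verbatim transcription of the expected-risk proof.
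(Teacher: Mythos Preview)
Your proposal is correct and matches the paper's approach essentially line-for-line: the paper's own proof of this theorem consists of the single sentence ``This follows as a consequence of the proofs in App.~A.4,'' where those proofs are already written in terms of the general risk $R_\psi$ and the constant $M=\bigl[\int_0^1|\psi(p)|^u dp\bigr]^{1/u}$ from their Lemma~A.1 --- exactly your H\"older-on-the-quantile-representation bound. The remaining safety and tightness chains you sketch (transporting via $\tilde\gamma$, unioning over $\tilde T$ rounds, using $V\geq\alpha$ and the stopping rule) are the same ones the paper uses in its proofs of parts (i)--(iii); your treatment of (iii) is in fact slightly cleaner since it handles both termination cases at once via $V(\hat\lambda_{T-1},\hat\lambda_T,\delta)\geq\alpha$ rather than splitting on whether $\hat\lambda_T=\hat\lambda_{T-1}$.
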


\section{Experiment}\label{sec:exp}

This section considers the application of our framework to credit scoring in both the expected and quantile risk settings. We present the most important results here and leave details to App.\ref{app:exp}.

\begin{figure}
   \centering
    \begin{subfigure}[b]{0.5\textwidth}
        \includegraphics[width=\textwidth]{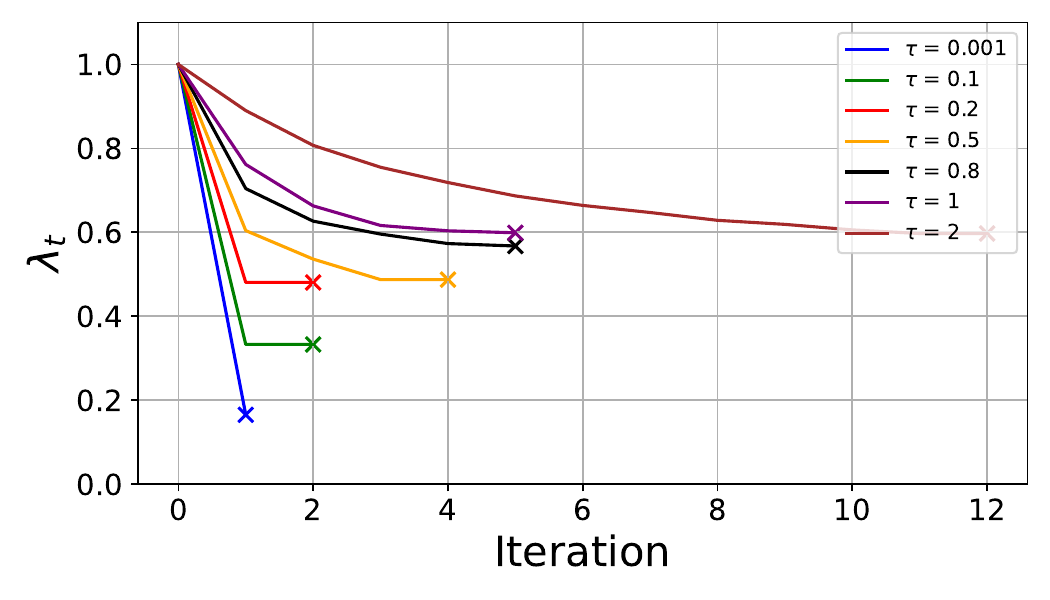}
        \label{fig:cser-a}
    \end{subfigure}%
    \hfill
    \begin{subfigure}[b]{0.5\textwidth}
        \includegraphics[width=\textwidth]{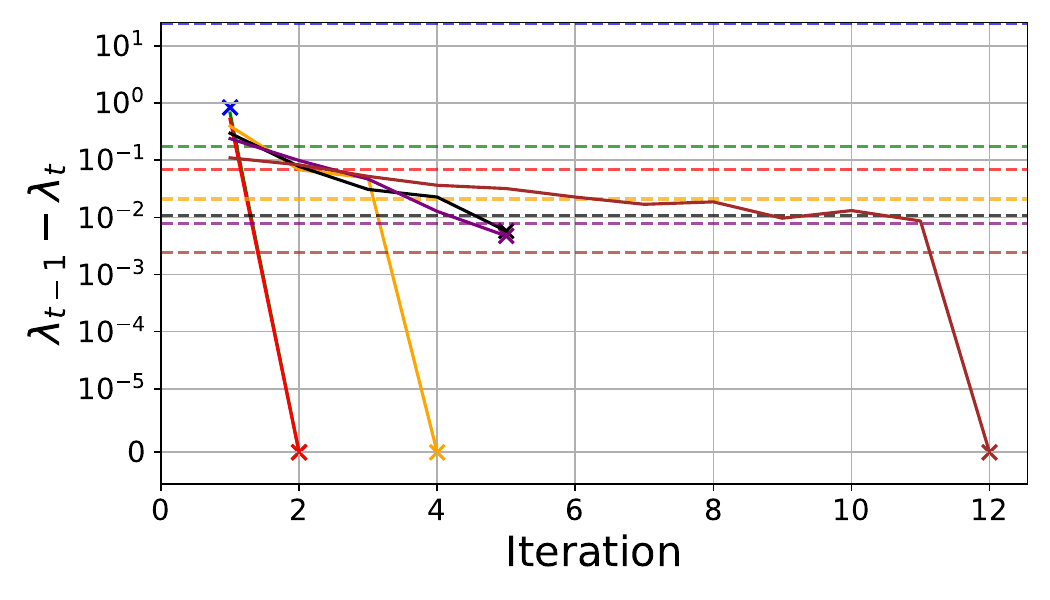}
        \label{fig:cser-b}
    \end{subfigure}%

    \begin{subfigure}[b]{0.5\textwidth}
        \includegraphics[width=\textwidth]{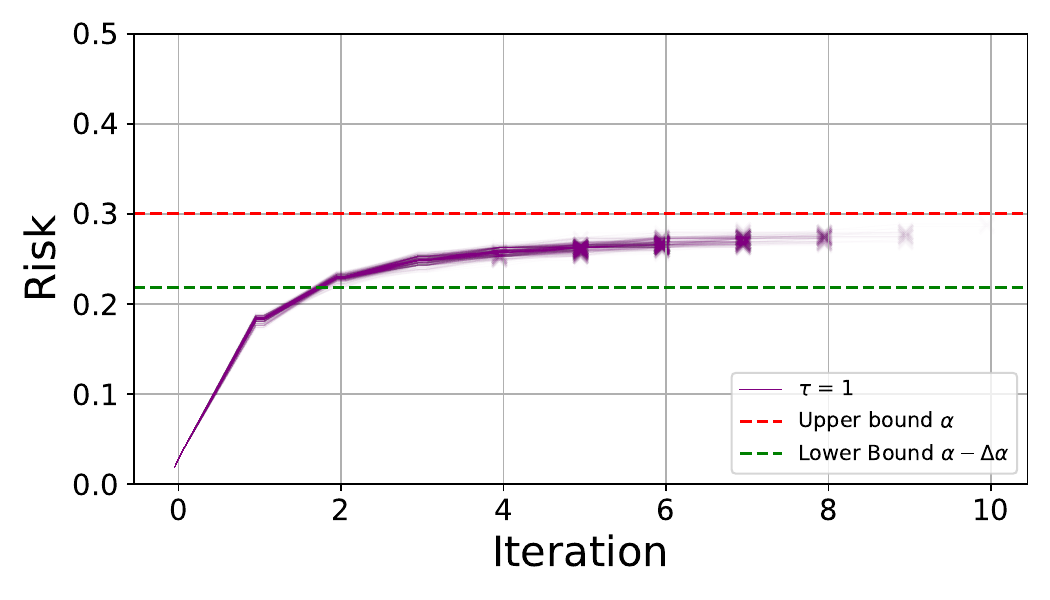}
        \label{fig:cser-c}
    \end{subfigure}%
    \hfill
    \begin{subfigure}[b]{0.5\textwidth}
        \includegraphics[width=\textwidth]{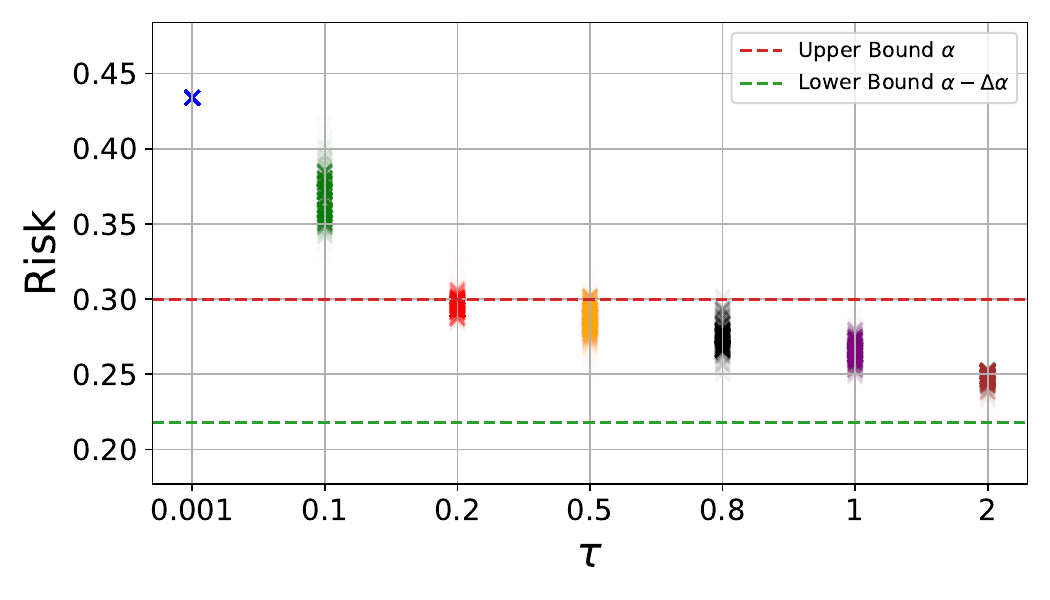}
        \label{fig:cser-d}
    \end{subfigure}
    \vspace{-1.5em}
    \caption{Type II error control in credit scoring. We use the same coloring scheme to label trajectories with different settings of $\tau$ across the subplots. \textbf{Top left}: As $\tau$ increases, the incremental updates to $\hat\lambda_t$ become more fine-grained and conservative. This safety comes at the trade-off of longer trajectories. \textbf{Top right}: An illustration of our stopping criteria for different values of $\tau$. To guarantee tightness with higher $\tau$, $\Delta\lambda$--displayed as the colored, horizontal dotted lines--needs to be lower. Once $\hat\lambda_{t-1} - \hat\lambda_t$ falls below $\Delta\lambda$, PRC ends exploration and returns $\hat\lambda_T=\lambda_t$. \textbf{Bottom left}: $1,000$ trajectories with $\tau=1$, each trajectory corresponding to a different cut between calibration and validation sets. The iterates $\hat\lambda_t$ were chosen based on the calibration set, and the displayed risk is on the validation set. Let $\{z_{t,i}\sim\cD(\hat\lambda_t)\}_{i=1}^{n_v}$ denote the validation samples from iteration $t$. For each iteration $t$, the plot displays $\frac{1}{n_v}\sum_{i=1}^{n_v} \ell(z_{t,i},\hat\lambda_t)$ followed by $\frac{1}{n_v}\sum_{i=1}^{n_v} \ell(z_{t,i},\hat\lambda_{t+1})$, most noticeable when $t=0$. Risk is controlled for both quantities. \textbf{Bottom right}: A scatterplot of $R(\hat\lambda_T)$ for each $\tau$. For $\tau\geq\gamma$, PRC achieves its goal of being $(\alpha,\delta)$-performative-risk-controlled. Even for some values of $\tau<\gamma$ (e.g., $\tau=0.5,0.8,1.0$), the final risk is still controlled (also see App.~\ref{app:cs-ev}).}
    \label{fig:credit-scoring-expected-risk}
\end{figure}

\textbf{Type II Error Control for Credit Scoring.} We first consider type II error control for the binary classification task. Labeling the positive class as $y=1$, we form the standard prediction $\cT_{\lambda,\epsilon}(x)=\text{clip}(\frac{1-\lambda+\epsilon-f(x)}{2\epsilon},0,1)$\footnote{We use $\epsilon=10^{-4}.$ This standard prediction approximates the indicator function $\bm{1}\{f(x)\leq1-\lambda\}$ as $\epsilon\rightarrow0$. For simplicity, our theory assumes a continuous loss function and does not deal with losses with bounded jump discontinuities, hence the approximation here. However, our theory can easily be extended to handle these cases as well, as done in \citet{angelopoulos2022conformal}.}, where $\lambda\in[\lambda_\text{min},\lambda_\text{safe}]=[0,1]$. Here, $f(x)$ estimates the probability that $x$ is positive. We assign $x$ to the negative class when $\cT_{\lambda,\epsilon}(x)=0$ and abstain from making a prediction otherwise. We consider the type II error $\ell(y,\cT_{\lambda,\epsilon}(x))=y(1-\cT_{\lambda,\epsilon}(x))$, accumulating loss for each positive instance assigned to the negative class. We apply this formulation to automatic credit approval. Credit applicants submit their applications with features $x$. The model $f(x)$ predicts the probability of a serious delinquency, and $y$ labels whether one occurred.  When $\cT_{\lambda,\epsilon}(x)=0$, the application is automatically accepted; otherwise, it is flagged and subjected to further human review. The threshold $\lambda\in[0, 1]$ trades off the balance of these two, with lower $\lambda$ corresponding to more automatic acceptances. We focus solely on the Type II errors made by the model.

However, $\lambda$ is \textit{performative}. We are not deploying $\lambda$ into a static distribution; applicants respond by maximizing their chances of receiving credit approval. Denote $x(\lambda)$ as an applicant's features in response to the deployment of $\lambda$. We simulate this shift \textit{in the features} by impacting $f(x(\lambda))$ as follows:
$$f(x(\lambda))=\begin{cases}
\max(0,f(x)-s) & \text{if } f(x)-s\leq 1-\lambda \\
f(x) & \text{otherwise}
\end{cases}$$

where $s(=0.3)$ is a value we choose. This shift is $(\gamma,1,\ell(\cdot,\lambda))$-sensitive for all $\lambda$ and choice of $s$; we explore the validity of this claim in App.~\ref{app:cs-ev}.

We demonstrate results on a class-balanced subset of a Kaggle credit scoring dataset \cite{GiveMeSomeCredit}, following \cite{perdomo2020performative}. Our simulation experiment is reported in Figure \ref{fig:credit-scoring-expected-risk}. We control risk at $\alpha=0.3$ with $\Delta\alpha=0.082$. We use $n=2000$, the CLT bound, and $\delta=0.1$. We experimentally verify that $\gamma\leq1.38$ (see App.~\ref{app:cs-ev}). As expected from Thm.~\ref{thm:risk-control}, PRC iterates safely as it converges upon a final $\hat\lambda$ that is both risk-controlled and not too conservative for large enough $\tau$.

\textbf{Quantile Risk Control in Credit Scoring.} In a highly imbalanced dataset where the negative class is much more common than the positive class, risk control on a small expected type II error will not be very meaningful. Quantile-based risk measures target scenarios like this one, in which we seek guarantees on the tail end of negative outcomes. Given that we are investigating quantile-based risk measures, we adjust our setup to more accurately reflect a setting in which they would be useful. For each positive delinquency on each iteration, we simulate a realized cost by drawing from $U[0,1]$, which can be thought of as the amount the institution loses on an individual, via being late on payments or defaulting, scaled between $0$ and $1$. We also assume knowledge of the base rate $p$ of a credited individual experiencing a serious delinquency; in this experiment, $p=0.0640$.

We focus on the $\beta$-CVaR risk measure, with $\beta=0.9$. We use a quantile-based-CLT confidence width that depends on $n$, $\delta$, $\tilde T$, $\beta$, and $p$ (see App.~\ref{app:cs-cvar}). We increase $n$ to $10,000$. Note that we require this many samples because $1-p$ of the population has a loss of $0$ and is not informative for us in setting $\lambda$. This setting follows Thm~\ref{thm:quantile} with $u=1$, $v=\infty$, and $\gamma \left[ \int_0^1 |\psi(p)|^vdp \right]^{1/v} = \gamma/(1-\beta)\leq\tau$. We experimentally verify $\gamma\leq 0.205$, so Thm.~\ref{thm:quantile} applies when $\tau\geq 2.05$. Fig. \ref{fig:credit-scoring-cvar} shows our results for $\alpha=0.25$ and $\Delta\alpha=0.12$. We observe that the confidence bounds are somewhat generous; each trajectory ends with a final $R(\hat\lambda_T)$ inside the bounds. However, this width is necessary given that this dataset is so highly skewed and that we are controlling the most variable $10\%$ of losses.

\begin{figure}[!ht]
    \centering
    \begin{subfigure}[b]{0.5\textwidth}
        \includegraphics[width=\textwidth]{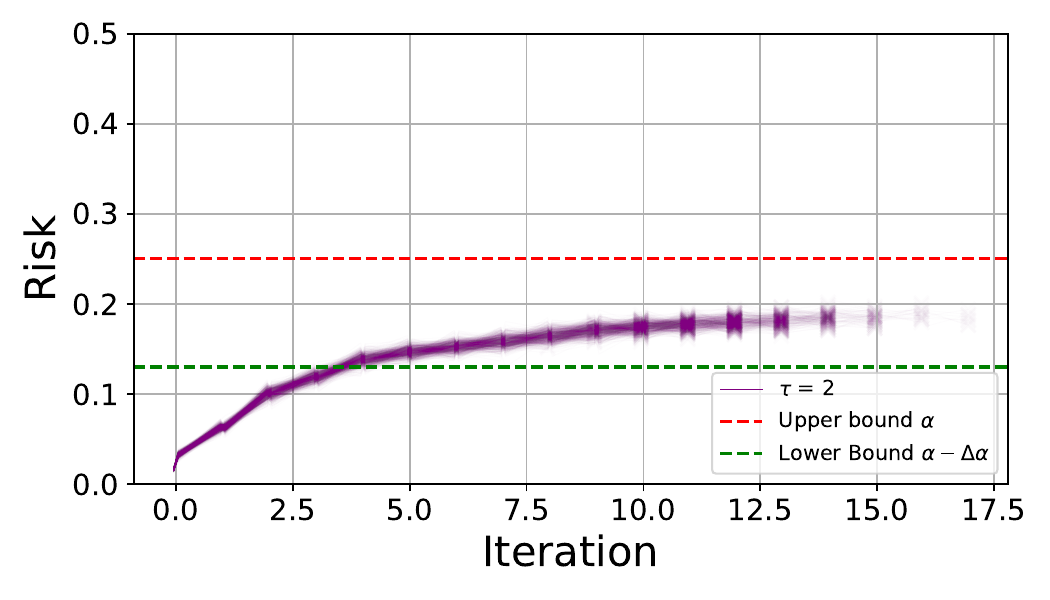}
        \label{fig:cscvar-c}
    \end{subfigure}%
    \hfill
    \begin{subfigure}[b]{0.5\textwidth}
        \includegraphics[width=\textwidth]{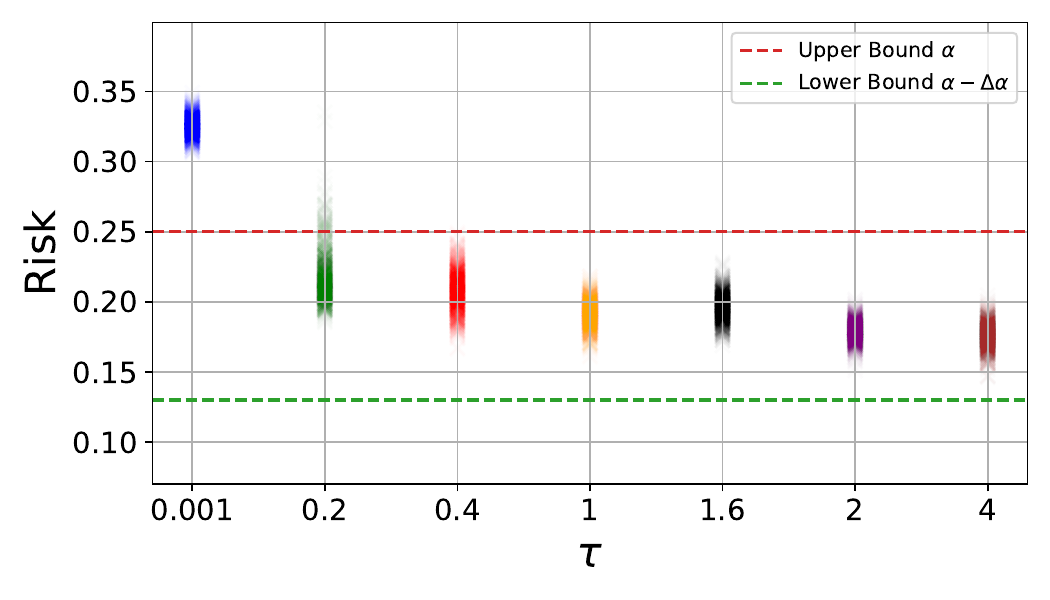}
        \label{fig:cscvar-d}
    \end{subfigure}
    \caption{$90\%$-CVaR type II error control in credit scoring. These plots correspond to the bottom two plots in Fig.\ref{fig:credit-scoring-expected-risk} but for the $90\%$-CVaR risk measure. The left plot shows $\tau=2$ instead of $\tau=1$.}
    \label{fig:credit-scoring-cvar}
\end{figure}

\section{Conclusion}\label{sec:outro}

We have introduced a theoretically sound and empirically effective framework for risk control under performativity, a previously unexplored topic in the risk control literature. To mitigate performativity, our framework iteratively refines a series of decision thresholds, careful to control risk. As a concrete example, we examined the deployment of a credit scoring system for loan approvals and found that PRC successfully controls risk, even as credit applicants seek to strategically manipulate their outcome. One limitation of our current framework is inherited from the line of work on risk control. Since risk control focuses on calibrating models via a simple postprocessing, it may not always be as effective as finetuning the entire model. For instance, this simple postprocessing may sometimes only output $\lambda_{\text{safe}}$. Nevertheless, given the growing role of blackbox models in shaping policies of societal significance, PRC offers an effective and proactive strategy for their safe deployment.

\bibliographystyle{apalike}
\bibliography{refs}


\newpage

\appendix
\section{Proofs}\label{app:proofs}

\subsection{Connection to sensitivity in performative prediction}
\label{app:sensitivity}

In \citet{perdomo2020performative}, $\epsilon$-sensitivity of the distribution mapping $\cD(\cdot)$ is a key assumption. The parameter $\epsilon$ controls the magnitude of the distribution shift, as measured on the samples $z=(x,y)$. In our formulation, we view shifts in $z$ as affecting some downstream loss function $\ell$. Hence, bounds on the Wasserstein-1 distance between two distributions of $z$'s can be translated to bounds on the Wasserstein-1 distance between two distributions of the loss, for an appropriately chosen $\tilde\gamma$. Thus, the original $\epsilon$-sensitivity assumption can be seen as a special case of our assumption.

\subsection{Choice of bounds}
\label{app:bounds}

The purpose of the confidence width $c(n,\delta')$ is to guard against generalization error. Because we need access to $c(n,\delta')$ before the initial deployment and $c(n,\delta')$ must guard against worst-case distributions that could arise from performative shift, the bounds presented here are looser than those found in the literature, yet still fairly tight. In this section, we detail how we can derive confidence widths from a variety of concentration inequalities, a comparison of which is shown in Fig.~\ref{fig:bounds.pdf}, for a bounded loss scaled in the range $[0,1]$.

\paragraph{Hoeffding's inequality}

For any $\lambda,\lambda'\in\Lambda$, if $z\sim\cD(\lambda)$, then $\ell(z,\lambda')\in[0,1]$. Because these are bounded random variables, Hoeffding's inequality applies, which says that for any $\delta'\in(0,1)$,
$$
\Prob{ \left|\hat R_n(\lambda,\lambda') - R(\lambda,\lambda') \right| \leq c(n,\delta') } \geq 1-\delta'
$$
where $c(n,\delta')=\sqrt{\frac{1}{2n} \ln(2/\delta')}$. While we present Hoeffding's inequality for illustration, we use sharper bounds in practice. The rest of the bounds we consider in this section are sample-dependent. We use knowledge of $\alpha$ to calculate their confidence widths, as detailed next.
\begin{figure}
	\centering
	\includegraphics[width=0.6\linewidth]{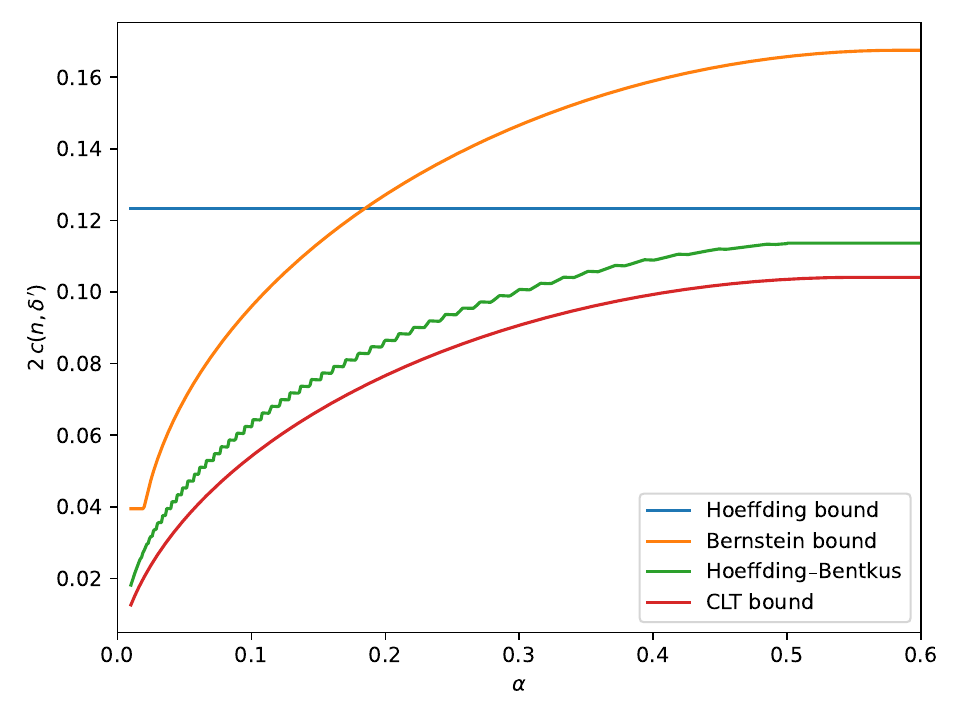}
	\caption{A comparison of Hoeffding and the risk level-dependent bounds. Plots $2c(n,\delta')$ vs. $\alpha$ for $n=2000$, $\tilde T=100$, $\delta=0.1$, and $\delta'=\delta/\tilde T$. $2c(n,\delta')$ represents the lower limit of achievable tightness $\Delta\alpha$. Lower confidence widths are better.}
	\label{fig:bounds.pdf}
\end{figure}

\paragraph{Risk level-dependent bounds}\label{app:risk-level-dependent-bounds} Sample-dependent concentration bounds require some care. Our algorithm calculates the confidence width $c(n,\delta')$ \textit{before the first deployment} without access to the distributions of future deployments. This task of using sample-dependent bounds without initial access to them is further complicated by performative shift, which impedes us from more specific characterizations of these future distributions, and hence from obtaining sharper bounds. However, we have knowledge of the risk control level $\alpha$ and can use that information to minimize $c(n,\delta')$ while making it wide enough to control risk throughout, and at the end of, the iterative process.

Specifically, on iteration $t$, we receive samples from the previous round $\{ z_{t-1,i}\}_{i=1}^n\overset{\text{i.i.d.}}{\sim} \mathcal{D}(\hat\lambda_{t-1})$ and can calculate $\hat R_n(\hat\lambda_{t-1},\lambda)$ for any $\lambda$. $\hat R_n(\hat\lambda_{t-1},\lambda)$ is the sample mean loss when evaluating with $\lambda$; from it, we can calculate the maximum sample variance, which occurs when the loss distribution is Bernoulli with probability $\hat R_n(\hat\lambda_{t-1},\lambda)$. Since our concentration inequalities use these two quantities to construct their bounds, the exact distribution we draw from does not matter, and the derived confidence widths will work for the distribution \textit{at any round}, regardless of performativity.

Next, note that our algorithm is functionally equivalent if we replace Line \ref{alg:line-sett} of Alg. \ref{alg:main} with the following update rule:

$$\lambda_t^{\text{mid}}\gets \inf \{ \lambda\in\Lambda \mid \hat R^+_n(\hat\lambda_{t-1},\lambda,\delta')+\tau\max(0,\hat\lambda_{t-1}-\lambda)< \alpha \}.$$

Since $\tau\max(0,\hat\lambda_{t-1}-\lambda)\geq0$ always, the candidates $\lambda$ for $\lambda_t^{\text{mid}}$ are a subset of $S_\alpha:=\{ \lambda\in\Lambda \mid \hat R^+(\hat\lambda_{t-1},\lambda,\delta')< \alpha \}$. Assume for all $\lambda$ we have access to a \textit{pointwise} confidence width $c(n,\delta',\lambda)$ that bounds $R(\hat\lambda_{t-1},\lambda)$ by knowledge of the sample statistics calculated from $\hat R_n(\hat\lambda_{t-1},\lambda)$ above. Let $\hat R_n^\pm(\hat\lambda_{t-1},\lambda,\delta') = \hat R_n(\hat\lambda_{t-1},\lambda,\delta') \pm \sup_{\lambda'\geq\lambda} c(n,\delta',\lambda')$. Define the update that does not consider performativity as $\tilde \lambda_t^\text{mid}:=\inf\{S_\alpha \}$. We set the confidence width as 
$$
c(n,\delta'):=\max \{ c(n,\delta',\lambda') \mid \lambda'\geq \tilde\lambda_t^\text{mid} \}.
$$

This setting ensures that throughout the entire trajectory of $\hat\lambda_t$ for which we bound $\hat R_n(\hat\lambda_{t-1}\hat\lambda_t) +\tau (\hat\lambda_{t-1}-\hat\lambda_t) \leq \alpha - c(n,\delta')$, we can guarantee with high probability that $R(\hat\lambda_{t-1},\hat\lambda_t) +\tau (\hat\lambda_{t-1}-\hat\lambda_t) \leq \alpha$ also. Next, we use this general result to obtain confidence widths from Bernstein's inequality, the Hoeffding-Benkus inequality, and the central limit theorem.

\paragraph{Bernstein's inequality}

\begin{proposition}[Empirical Bernstein Bounds {\cite[Theorem 11]{maurer2009empiricalbernsteinboundssample}}] Let $\mathbf{X}=(X_1,...,X_n)$ be a vector of independent random variables with values in $[0,1]$. Let $\delta$ > 0. Then with probability at least $1-\delta$, we have

$$
\mathbb{E} \left[ P_n(\mathbf{X}) \right] \leq P_n(\mathbf{X}) + \sqrt{\frac{2 V_n(\mathbf{X}) \ln 2/\delta}{n}} + \frac{7 \ln 2/\delta}{3 (n - 1)}
$$

where $P_n(\mathbf{X})$ denotes the sample mean and $V_n(\mathbf{X})$ denotes the sample variance.
\end{proposition}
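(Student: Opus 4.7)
The plan is to derive the stated empirical Bernstein bound as the composition of two high-probability estimates: (a) the classical (non-empirical) Bernstein inequality, which controls $\mathbb{E}[P_n(\mathbf{X})] - P_n(\mathbf{X})$ in terms of the \emph{true} variance $\sigma^2 := \mathrm{Var}(X_1)$; and (b) a high-probability upper bound on $\sigma$ in terms of the sample standard deviation $\sqrt{V_n(\mathbf{X})}$. Applying each at confidence level $\delta/2$ and taking a union bound gives an event of probability at least $1-\delta$ on which both inequalities hold; algebraic simplification then collapses the pieces into the stated bound.

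For step (a), Bernstein's inequality for independent random variables in $[0,1]$ yields that with probability at least $1 - \delta/2$,
\[
\mathbb{E}[P_n(\mathbf{X})] \leq P_n(\mathbf{X}) + \sqrt{\tfrac{2\sigma^2 \ln(2/\delta)}{n}} + \tfrac{\ln(2/\delta)}{3n}.
\]
For step (b), I would show that with probability at least $1-\delta/2$,
\[
\sigma \leq \sqrt{V_n(\mathbf{X})} + \sqrt{\tfrac{2\ln(2/\delta)}{n-1}}.
\]
The natural route is a bounded-differences (McDiarmid) argument applied not to $V_n$ itself but to the map $\mathbf{x}\mapsto\sqrt{V_n(\mathbf{x})}$, whose coordinate-wise differences scale like $1/\sqrt{n(n-1)}$ when each $X_i\in[0,1]$; this produces concentration of $\sqrt{V_n}$ around $\mathbb{E}[\sqrt{V_n}]$ at rate $1/\sqrt{n-1}$, and Jensen's inequality $\mathbb{E}[\sqrt{V_n}] \leq \sqrt{\mathbb{E}[V_n]} = \sigma$ turns this two-sided concentration into a one-sided tail on $\sigma - \sqrt{V_n}$.

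To finish, on the intersection event I substitute (b) into (a). Writing $\sqrt{2\sigma^2\ln(2/\delta)/n} = \sigma\sqrt{2\ln(2/\delta)/n}$ and distributing,
\[
\sigma\sqrt{\tfrac{2\ln(2/\delta)}{n}} \leq \sqrt{\tfrac{2 V_n(\mathbf{X})\ln(2/\delta)}{n}} + \tfrac{2\ln(2/\delta)}{\sqrt{n(n-1)}}.
\]
Bounding $1/\sqrt{n(n-1)} \leq 1/(n-1)$ and $1/n\leq 1/(n-1)$ and collecting the residual terms gives $\bigl(2 + \tfrac{1}{3}\bigr)\tfrac{\ln(2/\delta)}{n-1} = \tfrac{7\ln(2/\delta)}{3(n-1)}$, which is exactly the non-variance remainder in the claim.

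The main obstacle is the sharp variance-concentration step (b): a direct McDiarmid application to $V_n$ has coordinate differences of order $1/n$ and only controls $V_n$ at rate $1/\sqrt{n}$, which after square-rooting is too weak to give a residual of order $1/(n-1)$. The key is to work with $\sqrt{V_n}$ directly, where a careful algebraic estimate of $|\sqrt{V_n(\mathbf{x})}-\sqrt{V_n(\mathbf{x}')}|$ under a single-coordinate perturbation delivers the refined $1/\sqrt{n(n-1)}$ bounded-differences constant; this is the technical heart of the Maurer--Pontil argument and what lets the final remainder shrink to order $\ln(1/\delta)/(n-1)$ rather than $\ln(1/\delta)/\sqrt{n}$.
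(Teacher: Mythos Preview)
The paper does not prove this proposition; it is quoted as a black box from Maurer and Pontil and used only to build a confidence width. Your high-level decomposition---classical Bernstein at level $\delta/2$, a variance-concentration step $\sigma\le\sqrt{V_n(\mathbf{X})}+\sqrt{2\ln(2/\delta)/(n-1)}$ at level $\delta/2$, union bound, then the arithmetic that produces the $7/3$ constant---is exactly the Maurer--Pontil strategy, and your bookkeeping in the final substitution is correct.

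The gap is in your mechanism for step~(b). First, the bounded-differences constant you claim for $\sqrt{V_n}$ is wrong by a factor of $\sqrt{n-1}$: take $\mathbf{x}=\mathbf{0}$ and flip one coordinate to $1$; then $V_n$ jumps from $0$ to $1/n$ and $\sqrt{V_n}$ jumps by $1/\sqrt{n}$, not $1/\sqrt{n(n-1)}$. With $c_i=1/\sqrt{n}$, McDiarmid yields $\sum_i c_i^2=1$ and hence $O(1)$ fluctuations, which is vacuous. Second, Jensen's inequality $\mathbb{E}\sqrt{V_n}\le\sigma$ points the wrong way for what you need: writing $\sigma-\sqrt{V_n}=(\sigma-\mathbb{E}\sqrt{V_n})+(\mathbb{E}\sqrt{V_n}-\sqrt{V_n})$, concentration controls only the second term, while Jensen tells you the first term is \emph{nonnegative}, not that it is small. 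Maurer and Pontil close both issues not with McDiarmid but via a self-bounding argument: they verify that $Y:=(n-1)V_n$ (equivalently $\sum_i(x_i-\bar x)^2$) satisfies a weak self-bounding property, which yields a sub-Poisson lower tail $\Pr(\mathbb{E}Y-Y>t)\le\exp(-t^2/(2\mathbb{E}Y))$. Combining this with the elementary inequality $\sqrt{\mathbb{E}Y}-\sqrt{Y}\le(\mathbb{E}Y-Y)/\sqrt{\mathbb{E}Y}$ centers the bound directly at $\sqrt{\mathbb{E}Y}=\sqrt{n-1}\,\sigma$ and delivers $\Pr(\sigma-\sqrt{V_n}>s)\le\exp(-(n-1)s^2/2)$, bypassing both the bad McDiarmid constant and the Jensen-direction problem.
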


We observe the sample mean $\hat R_n(\hat\lambda_{t-1},\lambda)$ and can upper bound the sample variance with $\hat R_n(\hat\lambda_{t-1},\lambda) (1 - \hat R_n(\hat\lambda_{t-1},\lambda))$, which is obtained when the losses are Bernoulli. We further convert the one-sided bound into a two-sided bound, giving us the following:
$$
c(n,\delta',\lambda)=\sqrt{ \frac{2\hat R_n(\hat\lambda_{t-1},\lambda) (1 - \hat R_n(\hat\lambda_{t-1},\lambda)) \ln(4/\delta')}{n} } + \frac{7\ln(4/\delta')}{3(n-1)}.
$$
We can then use the technique described above in App. \ref{app:risk-level-dependent-bounds} to obtain the confidence width.

\paragraph{The Hoeffding-Bentkus inequality}

\begin{proposition}[Hoeffding-Bentkus inequality p-values {\cite[Proposition 1]{angelopoulos2022ltt}}]
\textit{The following is a valid p-value for $\mathcal{H}: R(\lambda)>\beta$ for $n$ samples}
\begin{equation}
    p^{\mathrm{HB}}(n,\hat R_n(\lambda), \beta) = \min \left( \exp \left\{-n h_1\left(\hat R_n(\lambda) \wedge \beta, \beta\right) \right\},\ e \mathbb{P}\left(\mathrm{Bin}(n, \beta) \le \left\lceil n \hat R_n(\lambda) \right\rceil \right) \right),
\end{equation}
where $R(\lambda)$ is a bounded risk, $\hat R_n(\lambda)$ is the sample risk, $h_1(a, b) = a \log(a/b) + (1 - a) \log((1 - a)/(1 - b))$ and $\wedge$ denotes the minimum.
\end{proposition}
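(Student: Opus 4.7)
The plan is to establish that $p^{\mathrm{HB}}$ stochastically dominates a uniform random variable on $[0,1]$ under the null $\mathcal{H}\colon R(\lambda) > \beta$, which is the defining property of a valid $p$-value. Since $\hat R_n(\lambda)$ is the empirical mean of $n$ i.i.d.\ $[0,1]$-valued losses with population mean $R(\lambda)$, increasing $R(\lambda)$ only makes the left tail of $\hat R_n(\lambda)$ stochastically smaller, while $p^{\mathrm{HB}}$ is monotone nondecreasing in $\hat R_n(\lambda)$. Hence it suffices to verify $\Pr\{p^{\mathrm{HB}}(n,\hat R_n(\lambda),\beta) \le u\} \le u$ at the boundary $R(\lambda) = \beta$; the corresponding inequality for all strictly larger $R(\lambda)$ follows by the same monotonicity.

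Next, I would bound the left-tail probability $\Pr\{\hat R_n(\lambda) \le r\}$, for $r \le \beta$, by each of the two terms inside the outer minimum, separately. For the first term I invoke the classical Chernoff-Hoeffding bound for bounded random variables: $\Pr\{\hat R_n(\lambda) \le r\} \le \exp(-n\, h_1(r,\beta))$ for $r \le \beta$. The truncation $\hat R_n(\lambda) \wedge \beta$ in the statement covers the complementary regime $\hat R_n(\lambda) > \beta$, where the exponent is forced to $0$ and the resulting $p$-value is just the trivial value $1$. For the second term, I invoke Bentkus's comparison inequality, which states that for i.i.d.\ $[0,1]$ summands, the left tail of the sample mean is dominated, up to the constant $e$, by the corresponding binomial tail: $\Pr\{\hat R_n(\lambda) \le r\} \le e\,\Pr\{\mathrm{Bin}(n,\beta) \le \lceil n r \rceil\}$. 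Both bounds hold pointwise in the observed value, so their minimum is also a valid tail bound.

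To assemble these into the required statement, fix $u \in [0,1]$. Both pointwise bounds are monotone nondecreasing in $r$, so the event $\{p^{\mathrm{HB}} \le u\}$ coincides with an event of the form $\{\hat R_n(\lambda) \le r^\star(u)\}$, where $r^\star(u)$ is the largest $r$ making the minimum of the two bounds at most $u$. Substituting $r = r^\star(u)$ back into whichever of the two tail bounds achieves the minimum at $r^\star(u)$ yields $\Pr\{\hat R_n(\lambda) \le r^\star(u)\} \le u$, which is exactly the required validity $\Pr\{p^{\mathrm{HB}} \le u\} \le u$.

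The hardest piece is Bentkus's comparison inequality itself: its proof is nonelementary, proceeding via an extremal reduction showing that for a fixed mean $\beta$, the Bernoulli$(\beta)$ law is extremal in the convex order for the tail functionals of interest, with the constant $e$ an artifact of that reduction. I would therefore cite Bentkus's original paper rather than re-derive this step. The remaining pieces---Chernoff-Hoeffding, the stochastic-order reduction from the composite null $R(\lambda) > \beta$ to the boundary case $R(\lambda) = \beta$, and the elementary observation that the minimum of two valid upper bounds is a valid upper bound---are routine concentration and monotonicity manipulations.
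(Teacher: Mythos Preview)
The paper does not give its own proof of this proposition: it is quoted verbatim as \cite[Proposition~1]{angelopoulos2022ltt} and used as a black box to build confidence widths. So there is no in-paper argument to compare against; your sketch is essentially the standard route used in the cited source (Chernoff--Hoeffding for the first term, Bentkus's binomial comparison for the second, then take the minimum).

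One imprecision worth flagging: your reduction from the composite null $R(\lambda)>\beta$ to the boundary $R(\lambda)=\beta$ is phrased as ``increasing $R(\lambda)$ only makes the left tail of $\hat R_n(\lambda)$ stochastically smaller.'' That is not true in general for $[0,1]$-valued distributions---a larger mean does not by itself imply stochastic dominance of the sample mean (e.g.\ a point mass at $1/2$ versus $\mathrm{Bernoulli}(0.6)$). The correct and equally simple patch is to observe that both tail bounds are themselves monotone in the mean parameter in the right direction: $h_1(r,\mu)$ is increasing in $\mu$ for $\mu\ge r$, and $\mathrm{Bin}(n,\mu)$ stochastically dominates $\mathrm{Bin}(n,\beta)$ for $\mu\ge\beta$. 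Hence both inequalities hold directly under any distribution with mean $\ge\beta$, and no separate reduction-to-boundary step is needed. With that adjustment your argument is complete.
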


$R(\hat\lambda_{t-1},\lambda)$ falls in the interval $$( \hat R_n^-(\hat\lambda_{t-1},\lambda,\delta'), \hat R_n^+(\hat\lambda_{t-1},\lambda,\delta') ):= (\hat R_n(\hat\lambda_{t-1},\lambda,\delta')-c, \hat R_n(\hat\lambda_{t-1},\lambda,\delta')+c)$$ if the following two hypotheses are rejected:
\[
\begin{aligned}
\mathcal{H}^{UB}: &\;
R(\hat\lambda_{t-1},\lambda)
>
\beta^{UB}
:=
\hat R_n^+(\hat\lambda_{t-1},\lambda,\delta')
\quad\text{with p-value}\;
p^{UB}(n,\hat R_n(\hat\lambda_{t-1}, \lambda),\beta^{UB}),
\\[6pt]
\mathcal{H}^{LB}: &\;
1 - R(\hat\lambda_{t-1},\lambda)
>
1 - \beta^{LB}
:=
1 - \hat R_n^-(\hat\lambda_{t-1},\lambda,\delta')
\quad\text{with p-value}\;
p^{LB}(n,1-\hat R_n(\hat\lambda_{t-1}, \lambda),1-\beta^{LB}).
\end{aligned}
\]

We simply choose the narrowest confidence width that provides us with $\delta'$ error control:
$$
c(n,\delta',\lambda):= \arg \min_{c\geq 0} p^{UB}(n, \hat R_n(\hat\lambda_{t-1},\lambda),\beta^{UB}) + p^{LB}(n,1-\hat R_n(1-\hat\lambda_{t-1},\lambda),1-\beta^{LB}) \leq \delta'.
$$

\paragraph{Central limit theorem}

Given $\hat R_n(\lambda_{t-1},\lambda)$, the maximum sample variance is $$\hat\sigma^2(n,\lambda)=\frac{1}{n-1}\sum_{i=1}^n \left( \ell(z^{(i)}_{t-1},\lambda)- \hat R_n(\lambda_{t-1},\lambda)\right)^2.$$

Denote the cumulative distribution function (CDF) of the standard normal as $\Phi$. With probability $1-\delta'$, $| R(\lambda_{t-1},\lambda) - \hat R_n(\lambda_{t-1},\lambda) \leq c(n,\delta',\lambda)$ where $c(n,\delta',\lambda)=\Phi^{-1}(1-\delta'/2) \hat\sigma(\lambda)/\sqrt{n}$.

\subsection{Asymptotic results for the lower bound}\label{app:asymptotics}

Recall that $\Delta\alpha=2\tau\Delta\lambda+2c(n,\delta/\tilde T)$ where $\tilde T\geq\lceil (\lambda_\text{safe}-\lambda_\text{min})/\Delta\lambda\rceil$. First, we need to ensure that there exist solutions $(\tilde T,\Delta\lambda)$ in this asymptotic regime. For a given $\Delta\lambda$, we can choose the minimum $\tilde T$ subject to its constraint and let $n\rightarrow \infty$. Doing so allows $c(n,\delta/\tilde T)\rightarrow0$. Since $\Delta\lambda$ can be chosen arbitrarily small, we can make $\Delta\alpha$ go to zero.

Suppose we let $\Delta\lambda\rightarrow 0$ and scale the number of samples $n$ as $\Delta\lambda=Cn^{-r}$ (for some constant $C$ and sufficiently high $r\geq 1/2$). We remark that the confidence width for the bounds we consider can be expressed asymptotically as $c(n,\delta/\tilde T)=O(\sqrt{\frac{\ln(\tilde T/\delta)}{n}})=O(\sqrt{\frac{\ln(1/\Delta\lambda)}{n}})=O(\frac{r\ln(n)}{\sqrt{n}})=O(\frac{\ln n}{\sqrt{n}})$. Hence, asymptotically

\begin{align*}
\Delta\alpha &= 2\tau\Delta\lambda + 2c(n,\delta/\tilde T) \\
&= O(n^{-r})+O(\frac{\ln n}{\sqrt{n}}) \\
&= O(\frac{\ln n}{\sqrt{n}})
\end{align*}
for $r\geq 1/2$. PRC allows users to choose a vanishingly tight lower bound $\Delta\alpha\rightarrow0$.

\subsection{Main proofs}\label{app:main-proofs}

In this section, we build towards our final goal of proving Thm.~\ref{thm:risk-control}. We do so by first showing that the performative error, arising from risk measures with the same evaluation threshold but whose samples are drawn from different distributions, can be bounded. We then demonstrate the validity of UCB calibration to bound the generalization error. Taken together, we can then prove Thm.~\ref{thm:risk-control}.

The following lemma uses the definition of a quantile-based risk measure from Def.~\ref{def:quantile-risk-measure}. Notably, $\psi(p)$ is a weighting function; $\psi(p)=1$ when we consider expected risk. It is important to remark that when $\psi(p)=1$, $\left[ \int_0^1 |\psi(p)|^udp \right]^{1/u}=1$ for $u\in[1,\infty]$.

\begin{lemma}
\label{lemma:sensitivity-quantile}
Let $u,v\in[1,\infty]$ with $1/u+1/v=1$. If $\left[ \int_0^1 |\psi(p)|^udp \right]^{1/u}\leq M$ and $\mathcal{D}(\cdot)$ is $(\gamma,v,\ell(\cdot,\lambda))$-sensitive for all $\lambda\in\Lambda$, then for any $\lambda_1,\lambda_2,\lambda\in\Lambda$:
$$
\left|
R_\psi(\lambda_1,\lambda) - R_\psi(\lambda_2,\lambda)
\right|
 \leq \gamma M |\lambda_1 - \lambda_2|.
$$
\end{lemma}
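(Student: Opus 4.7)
The plan is to express the difference $R_\psi(\lambda_1,\lambda) - R_\psi(\lambda_2,\lambda)$ as a single integral against the weighting function $\psi$, apply Hölder's inequality to split off the norm of $\psi$, and then recognize the remaining factor as a Wasserstein distance between the two induced loss distributions that we can control via the sensitivity assumption.

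Concretely, the first step is to use the definition
\[
R_\psi(\lambda_i,\lambda) = \int_0^1 \psi(p)\, F^{-1}(p\sc \lambda_i,\lambda)\, dp
\]
for $i=1,2$, so that by linearity
\[
R_\psi(\lambda_1,\lambda) - R_\psi(\lambda_2,\lambda)
= \int_0^1 \psi(p)\,\bigl[F^{-1}(p\sc \lambda_1,\lambda) - F^{-1}(p\sc \lambda_2,\lambda)\bigr]\, dp.
\]
Then I apply Hölder's inequality with conjugate exponents $u,v$ to bound the absolute value of this integral by
\[
\left[\int_0^1 |\psi(p)|^u\, dp\right]^{1/u}
\cdot
\left[\int_0^1 \bigl|F^{-1}(p\sc \lambda_1,\lambda) - F^{-1}(p\sc \lambda_2,\lambda)\bigr|^v\, dp\right]^{1/v}.
\]
The first factor is at most $M$ by assumption; in the edge cases $u=\infty$ or $v=\infty$, Hölder still applies with the usual essential supremum convention.

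The second step is to identify the second factor. The standard characterization of the $p$-Wasserstein distance between distributions on $\bR$ via their quantile functions gives
\[
W_v(\cD_{\ell(\cdot,\lambda)}(\lambda_1),\cD_{\ell(\cdot,\lambda)}(\lambda_2))
= \left[\int_0^1 \bigl|F^{-1}(p\sc \lambda_1,\lambda) - F^{-1}(p\sc \lambda_2,\lambda)\bigr|^v\, dp\right]^{1/v}
\]
for $v\in[1,\infty)$, with the natural $\sup$ version for $v=\infty$. By Assumption that $\cD(\cdot)$ is $(\gamma,v,\ell(\cdot,\lambda))$-sensitive for the specific evaluation threshold $\lambda$, this Wasserstein distance is at most $\gamma|\lambda_1-\lambda_2|$.

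Combining the two bounds yields the desired inequality $|R_\psi(\lambda_1,\lambda) - R_\psi(\lambda_2,\lambda)| \le \gamma M |\lambda_1-\lambda_2|$. The only slightly delicate point is the quantile-function representation of $W_v$ on $\bR$ and its behavior at the endpoints $u,v\in\{1,\infty\}$; this is a classical fact but worth citing/invoking carefully so the Hölder step and the Wasserstein identification both remain valid in the boundary cases. Everything else is a one-line application of linearity, Hölder, and the sensitivity hypothesis.
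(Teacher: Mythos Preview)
Your proposal is correct and follows essentially the same route as the paper: write the difference as a single integral against $\psi$, apply H\"older with exponents $(u,v)$, identify the second factor as $W_v$ between the two induced loss distributions via the quantile-function representation, and finish with the sensitivity assumption. If anything, your version is slightly more careful than the paper's, which writes several of the intermediate steps as equalities where they should be inequalities.
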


\begin{proof}
\begin{align*}
|R_\psi(\lambda_1,\lambda)-R_\psi(\lambda_2,\lambda)|
&=
\left|
\int_0^1 \psi(p) F^{-1}(\cdot\mathbin{;} \lambda_1,\lambda) dp - 
\int_0^1 \psi(p) F^{-1}(\cdot\mathbin{;} \lambda_2,\lambda) dp
\right|\\
&=
\left|
\int_0^1 \psi(p) \left[ F^{-1}(\cdot\mathbin{;} \lambda_1,\lambda) - F^{-1}(\cdot\mathbin{;} \lambda_2,\lambda) \right] dp 
\right|\\
&=
\int_0^1 \psi(p) \left| F^{-1}(\cdot\mathbin{;} \lambda_1,\lambda) - F^{-1}(\cdot\mathbin{;} \lambda_2,\lambda) \right| dp \\
&= \left( \int_0^1 |\psi(p)|^u dp \right)^{1/u} \left( \int_0^1 \left| F^{-1}(p\mathbin{;} \lambda_1,\lambda) - F^{-1}(p\mathbin{;} \lambda_2,\lambda) \right|^v dp \right)^{1/v}\\
&\leq M \left( \int_0^1 \left| F^{-1}(p\mathbin{;} \lambda_1,\lambda) - F^{-1}(p\mathbin{;} \lambda_2,\lambda) \right|^v dp \right)^{1/v}\\
&= MW_v(\mathcal{D}(\lambda_1),\mathcal{D}(\lambda_2))\\
&\leq M\gamma|\lambda_1-\lambda_2|
\end{align*}

\end{proof}

As a note on notation, from now on until the rest of this proof section, we write the risk measure as $R$, understanding that this refers to the more general form $R_\psi$ (where the subscript $\psi$ has been dropped).

In the following proposition, we restate Theorem~A.1 in~\citet{bates2021distribution} in our performative setting. This theorem is key to turning the pointwise bound used in line \ref{alg:line-sett} of Algorithm \ref{alg:main} into a proof of the validity of each iterate $\hat\lambda$.

\begin{proposition}[Validity of UCB Calibration]
\label{prop:prc-core}
Assume we have deployed some $\lambda \in \Lambda$ to obtain samples from the distribution $\mathcal{D}(\lambda)$. Let the true risk function $R(\lambda, \cdot): \Lambda \to \mathbb{R}$ and its $1 - \delta'$ upper confidence bound (UCB) $\hat R_n^+(\lambda, \cdot; \delta') : \Lambda \to \mathbb{R}$ be continuous and non-increasing functions. Suppose there exists some $\lambda' \in \Lambda$ such that both
\[
R(\lambda, \lambda') \leq \alpha \quad \text{and} \quad \hat R_n^+(\lambda, \lambda'; \delta') \leq \alpha.
\]
Consider the smallest $\lambdasp$ whose UCB falls below the risk level:

\[
\lambdasp:=\inf \{\lambda'\in\Lambda: \hat R_n^+(\lambda,\lambda',\delta')< \alpha \}
\]

Then,
\[
\mathbb{P}\left( R(\lambda,\lambdasp)\leq\alpha \right) \geq 1-\delta'
\]
\end{proposition}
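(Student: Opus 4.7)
The plan is to reduce this random, data-dependent claim about $\lambdasp$ to a one-point confidence statement at a carefully chosen deterministic threshold, exploiting the common monotonicity of $R(\lambda,\cdot)$ and $\hat R_n^+(\lambda,\cdot;\delta')$. The key idea is that because both functions are non-increasing, the calibrated sublevel set $S := \{\lambda' \in \Lambda : \hat R_n^+(\lambda,\lambda';\delta') < \alpha\}$ is upward-closed in $\Lambda$, so $\lambdasp = \inf S$ will automatically be at least as large as any deterministic point whose UCB sits at or above $\alpha$.

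First, I would introduce the deterministic critical point $\lambda^{\star} := \inf\{\lambda' \in \Lambda : R(\lambda,\lambda') \leq \alpha\}$. The existence hypothesis makes the defining set nonempty, so $\lambda^{\star}$ is well-defined, and non-increasingness plus continuity of $R(\lambda,\cdot)$ yield $R(\lambda,\lambda^{\star}) \leq \alpha$. In the interior case $\lambda^{\star} > \lambda_{\min}$, continuity further upgrades this to $R(\lambda,\lambda^{\star}) = \alpha$, because $R(\lambda,\lambda') > \alpha$ for every $\lambda' < \lambda^{\star}$ by the infimum definition. The boundary case $\lambda^{\star} = \lambda_{\min}$ is handled trivially, since $\lambdasp \geq \lambda_{\min} = \lambda^{\star}$ holds deterministically and the conclusion follows at once from monotonicity of $R(\lambda,\cdot)$.

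Second, I would apply the pointwise confidence guarantee of Eq.~\eqref{eq:pointwise-bound} at the fixed, deterministic point $\lambda^{\star}$, obtaining $\Pr\bigl(\hat R_n^+(\lambda,\lambda^{\star};\delta') \geq R(\lambda,\lambda^{\star})\bigr) \geq 1 - \delta'$. On this high-probability event we have $\hat R_n^+(\lambda,\lambda^{\star};\delta') \geq \alpha$, hence $\lambda^{\star} \notin S$. Since $\hat R_n^+(\lambda,\cdot;\delta')$ is non-increasing, $S$ is upward-closed, so $\lambda^{\star} \notin S$ forces $\lambdasp = \inf S \geq \lambda^{\star}$. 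A final appeal to monotonicity of $R(\lambda,\cdot)$ gives $R(\lambda,\lambdasp) \leq R(\lambda,\lambda^{\star}) \leq \alpha$, which is the desired conclusion.

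The main subtlety, rather than a genuine obstacle, is that $\lambdasp$ is a data-dependent random quantity, so the pointwise UCB guarantee cannot be invoked at $\lambda' = \lambdasp$ directly without paying a union-bound price over $\Lambda$. Monotonicity is precisely the tool that sidesteps this difficulty: it collapses what would otherwise be a uniform statement over $\Lambda$ into a one-shot statement at the deterministic level-crossing $\lambda^{\star}$, at which the event $\hat R_n^+(\lambda,\lambda^{\star};\delta') \geq R(\lambda,\lambda^{\star})$ automatically implies $\hat R_n^+(\lambda,\lambda^{\star};\delta') \geq \alpha$. This mirrors the argument of Theorem~A.1 of \citet{bates2021distribution}; performativity plays no role in this particular step because we have already conditioned on the fixed deployment $\lambda$ and work only with the sampling randomness from $\mathcal{D}(\lambda)$.
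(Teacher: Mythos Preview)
Your proposal is correct and follows essentially the same route as the paper: both arguments introduce the deterministic level-crossing $\lambda^{\star}=\inf\{\lambda':R(\lambda,\lambda')\le\alpha\}$, invoke the pointwise UCB guarantee only at this fixed point, and then use monotonicity to transfer the conclusion to the data-dependent $\lambdasp$. The paper phrases the last step as a proof by contradiction (assume $R(\lambda,\lambdasp)>\alpha$ and derive that the UCB at $\lambda^{\star}$ fails), whereas you argue directly and also handle the boundary case $\lambda^{\star}=\lambda_{\min}$ explicitly; these are cosmetic differences, not substantive ones.
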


\begin{proof}
First, we claim that $\hat R_n^+(\lambda,\cdot,\delta'):=\hat R_n(\lambda,\cdot)+c(n,\delta')$ is non-increasing and continuous. By the property of the loss function, $\hat R_n(\lambda,\cdot)$ is non-increasing and continuous. Further, $c(n,\delta')$ is constant by construction, proving the claim.

Assume $R(\lambda,\lambdasp)>\alpha$. Consider the smallest $\lambda_*$  that controls the risk
\[
\quad
\lambda_*:=\inf \{\lambda'\in\Lambda: R(\lambda,\lambda')\leq \alpha \}
\]

Because $R(\lambda,\cdot)$ is non-increasing and continuous, $\lambdasp<\lambda_*$. Further, since $\lambdasp$ satisfies $\hat R_n^+(\lambda,\lambdasp,\delta')<\alpha$ and $\hat R_n^+(\lambda,\cdot,\delta')$ is also non-increasing and continuous, $\hat R_n^+(\lambda,\lambda_*,\delta')<\alpha$. However, since $R(\lambda,\lambda_*)=\alpha$ (by continuity), the pointwise confidence bounds in Equation \ref{eq:pointwise-bound} ensures that this occurs with probability at most $1-\delta'$.
\end{proof}

Now that we have established how we can bound the performative and generalization error, we now prove the following important theorem, which can then be used to prove each point of Thm.~\ref{thm:risk-control}.

\begin{theorem}[Safety of Each Deployment]
\label{thm:safety-of-each-deployment}
Take the setting of Theorem \ref{thm:risk-control}. With probability $1-\delta$, $R(\hat\lambda_t,\hat\lambda_t)\leq \alpha$ is ensured for all $t$ in $0\leq t\leq T$ (here, we denote $\hat\lambda_0=\lambda_0$).
\end{theorem}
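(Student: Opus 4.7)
The plan is to prove the claim by induction on $t$ together with a single union bound over the at-most $\tilde T$ iterations produced by Algorithm~\ref{alg:main}. The base case $t=0$ is immediate: $\hat\lambda_0=\lambda_{\text{safe}}$ and the setup guarantees $\ell(z,\lambda_{\text{safe}})=0$, so $R(\hat\lambda_0,\hat\lambda_0)=0\le\alpha$ deterministically. For the inductive step I condition on the (random) value of $\hat\lambda_{t-1}$ and on the event $R(\hat\lambda_{t-1},\hat\lambda_{t-1})\le\alpha$; the case $\hat\lambda_t=\hat\lambda_{t-1}$ transfers the claim trivially, so I focus on $\hat\lambda_t<\hat\lambda_{t-1}$.

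To absorb the performativity guard directly into the calibration argument, I would introduce the shifted functions
\[
g_t(\lambda') := R(\hat\lambda_{t-1},\lambda')+\tau(\hat\lambda_{t-1}-\lambda'),\qquad
\hat g^+_{t,n}(\lambda',\delta') := \hat R^+_n(\hat\lambda_{t-1},\lambda',\delta')+\tau(\hat\lambda_{t-1}-\lambda').
\]
Both are continuous and non-increasing in $\lambda'$ (the risk term by the monotonicity of $\ell$ and continuity of $\ell(z,\cdot)$, the $\tau$-term by inspection), and the update rule in Line~\ref{alg:line-sett} rewrites cleanly as $\hat g^+_{t,n}(\hat\lambda_t,\delta/\tilde T)\le\alpha$. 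Thus the pair $(g_t,\hat g^+_{t,n})$ is exactly the setup needed to reuse Proposition~\ref{prop:prc-core} verbatim.

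Next I define the conditionally-deterministic threshold $\lambda_*^{(t)}:=\inf\{\lambda'\in\Lambda : g_t(\lambda')\le\alpha\}$. The inductive hypothesis gives $g_t(\hat\lambda_{t-1})=R(\hat\lambda_{t-1},\hat\lambda_{t-1})\le\alpha$, so $\lambda_*^{(t)}$ is well-defined and lies in $[\lambda_{\min},\hat\lambda_{t-1}]$. If we had $g_t(\hat\lambda_t)>\alpha$, then continuity gives $g_t(\lambda_*^{(t)})=\alpha$ and monotonicity forces $\hat\lambda_t<\lambda_*^{(t)}$; hence $\hat g^+_{t,n}(\lambda_*^{(t)},\delta/\tilde T)\le \hat g^+_{t,n}(\hat\lambda_t,\delta/\tilde T)\le\alpha=g_t(\lambda_*^{(t)})$, which after cancelling the $\tau$-term is the assertion that the pointwise UCB of Equation~\eqref{eq:pointwise-bound} fails at $\lambda_*^{(t)}$. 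Since the iteration-$t$ samples are drawn i.i.d.\ from $\cD(\hat\lambda_{t-1})$ and $\lambda_*^{(t)}$ is deterministic once $\hat\lambda_{t-1}$ is fixed, Equation~\eqref{eq:pointwise-bound} bounds this conditional probability by $\delta/\tilde T$. A union bound over $t\in[\tilde T]$ (valid because the joint-solve assumption guarantees $T\le\tilde T$) then shows that with probability at least $1-\delta$ we have $g_t(\hat\lambda_t)\le\alpha$ simultaneously for all $t\in[T]$.

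Finally, on this good event I close the induction by invoking Lemma~\ref{lemma:sensitivity-quantile} (applied with $\psi\equiv 1$, $u=\infty$, $v=1$, so $M=1$): since $\tau\ge\gamma$ and $\hat\lambda_t\le\hat\lambda_{t-1}$,
\[
R(\hat\lambda_t,\hat\lambda_t) \;\le\; R(\hat\lambda_{t-1},\hat\lambda_t)+\gamma(\hat\lambda_{t-1}-\hat\lambda_t) \;\le\; R(\hat\lambda_{t-1},\hat\lambda_t)+\tau(\hat\lambda_{t-1}-\hat\lambda_t) \;=\; g_t(\hat\lambda_t) \;\le\; \alpha,
\]
which both carries the inductive hypothesis to step $t$ and proves the theorem. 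The main obstacle is the data-dependence of $\hat\lambda_t$: Equation~\eqref{eq:pointwise-bound} cannot be applied at a random argument, and the whole role of the UCB/monotonicity trick is to reduce the ``bad event at the data-dependent $\hat\lambda_t$'' to the ``bad event at the conditionally-deterministic $\lambda_*^{(t)}$'', where a clean pointwise probability statement is available. A secondary care point is that the pointwise inequality must be applied to the shifted quantity $g_t$ rather than the raw $R$, which is exactly why the performativity guard $\tau(\hat\lambda_{t-1}-\lambda)$ was built into $V$ in the first place: it matches the slack that the sensitivity bound from Lemma~\ref{lemma:sensitivity-quantile} will later spend.
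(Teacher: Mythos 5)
Your proof is correct and follows essentially the same route as the paper's: induction with a union bound over the at most $\tilde T$ iterations, Lemma~\ref{lemma:sensitivity-quantile} with $M=1$ to absorb the performative error into the guard $\tau\ge\gamma$, and the monotonicity/continuity argument of Proposition~\ref{prop:prc-core} to transfer the failure of the pointwise bound in Equation~\eqref{eq:pointwise-bound} from the data-dependent $\hat\lambda_t$ to a conditionally deterministic threshold. The one difference is organizational but welcome: you fold the performativity guard $\tau(\hat\lambda_{t-1}-\lambda')$ into the UCB before running the calibration argument, so that the infimum defining $\hat\lambda_t$ coincides exactly with the function you calibrate, whereas the paper invokes Proposition~\ref{prop:prc-core} for the raw risk (whose associated infimum $\lambda_*^+$ is not literally $\hat\lambda_t$) and adds the guard afterwards --- your variant makes that step airtight.
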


\begin{proof}
We establish this theorem by verifying the following inductive statement: on iteration $t$, $R(\hat\lambda_s)\leq\alpha$ for all $s$ in $0\leq s\leq t$ with probability $1-\delta t/\tilde T$.

To establish the base case, recall the definition of \(\lambda_{\text{safe}}\), from which we see $\ell(Z,\lambda_0)=\ell(Z,\lambda_{\text{safe}})=0<\alpha$, trivially satisfying $R(\lambda_0)<\alpha$ with probability $1$. Afterwards, for $t\geq 1$, there are two cases:

\textbf{Case 1:} $\hat\lambda_t=\hat\lambda_{t-1}$. From induction, $R(\hat\lambda_s)<\alpha$ for all $s$ in $0\leq s \leq t-1$ with probability $1 - \delta (t-1)/\tilde T$. Since $\hat\lambda_{t}=\hat\lambda_{t-1}$, $R(\hat\lambda_s)<\alpha$ for all $s$ in $0\leq s \leq t$ with probability $1-\delta t/\tilde T < 1 - \delta (t-1)/\tilde T$.

\textbf{Case 2:} $\hat\lambda_t<\hat\lambda_{t-1}$. This implies $\hat\lambda_{t} = \inf \{\lambda\in \Lambda\mid V(\hat\lambda_{t-1},\lambda,\delta)\le \alpha\}$. The following chain of inequalities holds with probability \(1-\delta/\tilde T\):
\begin{align*}
R\bigl(\hat\lambda_t\bigr)
&\;=\;
   R\bigl(\hat\lambda_{t-1}, \hat\lambda_t\bigr)
   \;+\;
   R\bigl(\hat\lambda_t, \hat\lambda_t\bigr)
   \;-\;
   R\bigl(\hat\lambda_{t-1}, \hat\lambda_t\bigr)\\
&\;\le\;
  R\bigl(\hat\lambda_{t-1}, \hat\lambda_t\bigr)
   \;+\;
   \gamma M \,\bigl|\hat\lambda_t-\hat\lambda_{t-1}\bigr|\\
&\;\le\;
  \hat R_n^+\bigl(\hat\lambda_{t-1}, \hat\lambda_t,\delta/\tilde T \bigr)
  \;+\;
   \gamma M \,\bigl|\hat\lambda_t-\hat\lambda_{t-1}\bigr|\\
&\;=\;
  \alpha
  \;-\;
  \tau\,(\hat\lambda_{t-1}-\hat\lambda_t)
  \;+\;
  \gamma M \,\bigl|\hat\lambda_t-\hat\lambda_{t-1}\bigr|\\
&\;=\;
  \alpha
  \;-\;
  \bigl(\tau - \gamma M\bigr)\,\bigl|\hat\lambda_t-\hat\lambda_{t-1}\bigr|\\
&\;\le\;
  \alpha
\end{align*}

where the second line follows from Lemma~\ref{lemma:sensitivity-quantile}, the third line from Proposition~\ref{prop:prc-core}, and the fourth line follows from the continuity of the loss function $\ell(z,\lambda)$ in $\lambda$ for all $z$.

Since by induction $R(\hat\lambda_s)<\alpha$ for all $s$ in $0\leq s\leq t-1$ with probability $1-\delta (t-1)/\tilde T$, taking the union bound verifies that risk is control for all $s$ in $0\leq s\leq t$ with probability $1-\delta t/\tilde T$. Since $T\leq \tilde T$, this completes the proof.
\end{proof}

Having established Thm.~\ref{thm:safety-of-each-deployment}, we can now prove each part of Thm.~\ref{thm:risk-control}, which we do in the following three proofs. Note that we denote $M=\left[ \int_0^1 |\psi(p)|^udp \right]^{1/u}$ as in Lemma~\ref{lemma:sensitivity-quantile}, and $M=1$ for the case of expected risk.

\begin{proof}[Proof of part (i)]

We verify the following inductive statement: with probability at least  $1-\delta t/{\tilde T}$, $R(\hat\lambda_{s-1},\hat\lambda_{s})\leq\alpha$ for all $s$ in $0<s\leq t$. Once verified, since PRC takes at most $\tilde T$ iterations to finish, the theorem follows.

\textbf{Base case: } $t=1$

If $\hat\lambda_{1}=\lambda_0$, then with probability $1$, $R(\lambda_0,\hat\lambda_{1})=0<\alpha$ by the definition of $\lambda_{\text{safe}}$. Otherwise, $\hat\lambda_{1} = \inf \left\{ \lambda \in \Lambda \mid V(\lambda_0, \lambda, \delta) \leq \alpha \right\}$
, and with probability $1-\delta/\tilde T$,

\begin{align*}
R\bigl(\lambda_0,\hat\lambda_{1}\bigr)
&\;\le\;
   \hat R_n^+ \bigl(\lambda_0, \hat\lambda_{1},\delta/\tilde T\bigr)\\
&\;=\;
  \alpha
  \;-\;
  \tau M\,(\lambda_0-\hat\lambda_{1})\\
&\;\leq\;
  \alpha
\end{align*}

The first inequality follows from Proposition \ref{prop:prc-core}, and the last follows because if $\hat\lambda_{1} = \inf \left\{ \lambda \in \Lambda \mid V(\hat{\lambda}_{t-1}, \lambda, \delta) \leq \alpha \right\}$, then $\hat\lambda_{1}<\lambda_0$.

\textbf{Inductive step: } $t>1$

Assume $R(\hat\lambda_{s-1},\hat\lambda_{s})\leq \alpha$ for all $s$ in $0< s \leq t-1$ with probability $1-\delta (t-1)/\tilde T$. On iteration $t$, either $\hat\lambda_t=\hat\lambda_{t-1}$ or $\hat\lambda_t<\hat\lambda_{t-1}$. If $\hat\lambda_t=\hat\lambda_{t-1}$, we appeal to Thm.  \ref{thm:safety-of-each-deployment} to ensure that $R(\hat\lambda_{t-1},\hat\lambda_t)\leq \alpha$ with probability $1-\delta t/\tilde T$. The error rate of $\delta(t-1)/\tilde T$ from the inductive step does not accumulate with that of Theorem \ref{thm:safety-of-each-deployment} because both rely on the same pointwise UCB $\hat R_n^+(\hat\lambda_{t-1},\cdot)$ per iteration in the same trajectory. Hence, their maximum becomes the new error rate $\delta t / \tilde T$.

Otherwise, $\hat\lambda_t<\hat\lambda_{t-1}$. In this case, we follow similar steps to the base case. With probability $1-\delta/\tilde T$,

\begin{align*}
R\bigl(\hat\lambda_{t-1},\hat\lambda_t\bigr)
&\;\le\;
   \hat R_n^+ \bigl(\hat\lambda_{t-1}, \hat\lambda_t,\delta/\tilde T\bigr)\\
&\;=\;
  \alpha
  \;-\;
  \tau M\,(\hat\lambda_{t-1}-\hat\lambda_t)\\
&\;\leq\;
  \alpha
\end{align*}

Taking the union bound with the inductive assumption, we complete the proof.
\end{proof}

\begin{proof}[Proof of part (ii)]

Appealing to Thm. \ref{thm:safety-of-each-deployment} completes this part of the proof.

\end{proof}

\begin{proof}[Proof of part (iii)]

Algorithm \ref{alg:main} returns $\hat\lambda_T$ as the final risk-controlling conservativeness parameter. From Equation \ref{eq:pointwise-bound}, with probability $1-\delta'=1-\delta/\tilde T$,
\[
\hat  R_n\bigl(\hat\lambda_{T-1}, \hat\lambda_{T}\bigr) - c(n,\delta') \leq  R\bigl(\hat\lambda_{T-1}, \hat\lambda_{T}\bigr)\leq  \hat R_n\bigl(\hat\lambda_{T-1}, \hat\lambda_{T}\bigr) + c(n,\delta').
\]

There are two cases to consider at the algorithm's end.

\textbf{Case 1:} $\hat\lambda_{T}=\hat\lambda_{T-1}$. This implies 
$\hat\lambda_{T-1} \leq \inf \{\lambda\in \Lambda\mid V(\hat\lambda_{t-1},\lambda,\delta)\le \alpha\}:=\hat\lambda^{\text{mid}}_T$.

\begin{align*}
R\bigl(\hat\lambda_T\bigr)
&= 
   R\bigl(\hat\lambda_{T-1}\bigr) \\
&\geq 
   \hat R_n^-\bigl(\hat\lambda_{T-1},\hat\lambda_{T-1},\delta/\tilde T\bigr) \\
&=
   \hat R_n^+\bigl(\hat\lambda_{T-1},\hat\lambda_{T-1},\delta/\tilde T\bigr) - 2c(n,\delta/\tilde T) \\
&\geq
   \hat R_n^+\bigl(\hat\lambda_{T-1},\hat\lambda^{\text{mid}}_T,\delta/\tilde T\bigr) - 2c(n,\delta/\tilde T) \\
&=
   \alpha - \tau(\hat\lambda_{T-1}-\hat\lambda^{\text{mid}}_T) - 2c(n,\delta/\tilde T) \\
&\geq
   \alpha - 2c(n,\delta/\tilde T) \\
\end{align*}

where the first inequality is with probability $1-\delta/\tilde T$ and the second inequality follows from the continuity and monotonicity of $\hat R_n(\lambda,\cdot)$ for all $\lambda$.

\textbf{Case 2:} $\hat\lambda_{T-1}-\Delta\lambda\leq \hat\lambda_{T}<\hat\lambda_{T-1}$. Recall that by Lemma \ref{lemma:sensitivity-quantile}, we can bound the performative error:

\[
\left| R\bigl(\hat\lambda_{T}, \hat\lambda_{T}\bigr)
   \;-\;
   R\bigl(\hat\lambda_{T-1}, \hat\lambda_{T}\bigr) \right|\leq\gamma M|\hat\lambda_{T}-\hat\lambda_{T-1}|.
\]

With probability $1-\delta'=1-\delta/\tilde T$,

\begin{align*}
R(\hat\lambda_{T})
&= R(\hat\lambda_{T-1},\hat\lambda_{T}) + \left[ R(\hat\lambda_{T},\hat\lambda_{T}) - R(\hat\lambda_{T-1},\hat\lambda_{T}) \right]\\
&\geq R(\hat\lambda_{T-1},\hat\lambda_{T}) - \gamma M |\hat\lambda_{T-1} - \hat\lambda_{T}| \\
&\geq \hat R_n^-(\hat\lambda_{T-1},\hat\lambda_{T},\delta') -  \gamma M |\hat\lambda_{T-1} - \hat\lambda_{T}|\\
&=\hat R_n^+(\hat\lambda_{T-1},\hat\lambda_{T},\delta') -  \gamma M |\hat\lambda_{T-1} - \hat\lambda_{T}| - 2c(n,\delta')\\
&= \alpha-\tau(\hat\lambda_{T-1}-\hat\lambda_{T}) -  \gamma M |\hat\lambda_{T-1} - \hat\lambda_{T}| - 2c(n,\delta')\\
&\geq \alpha-2\tau(\hat\lambda_{T-1}-\hat\lambda_{T}) - 2c(n,\delta')\\
&\geq \alpha - 2\tau\Delta\lambda - 2c(n,\delta')\\
&= \alpha-\Delta\alpha
\end{align*}

where we set $\Delta\lambda$ such that $\Delta\alpha= 2\tau\Delta\lambda + 2c(n,\delta')$. Doing so allows us to achieve $\Delta\alpha$ tightness with probability $1-\delta$.

\end{proof}
\subsection{Quantile extension proofs}\label{app:quantile-ext-proofs}

\paragraph{The risk measure's lower bound corresponds to the CDF upper bound.} Assume we have access to lower and upper CDF confidence bounds such that for all $\lambda\in\{ \hat\lambda_t \}_{t=0}^T$ and $\lambda'\in\Lambda$,

$$
\mathbb{P}\left( \hat F_{L,n}(w \mathbin{;} \lambda,\lambda',\delta')\leq  F(w \mathbin{;} \lambda,\lambda')\leq \hat F_{U,n}(w \mathbin{;} \lambda,\lambda',\delta')   \right) \geq 1-\delta' \quad \forall w
$$

We rewrite the above as $\mathbb{P}\left( \hat F_{L,n} \preceq  F\preceq \hat F_{U,n} \right) \geq 1-\delta'$, dropping the $\lambda$ and $\lambda'$ for the rest of this section. The following property establishes that $\hat F_{L,n} \preceq  F\preceq \hat F_{U,n}$ implies $\hat F^{-1}_{U,n} \preceq F^{-1} \preceq \hat F^{-1}_{L,n}$.

\begin{proposition}
If $F\preceq G$, then $F^{-1}\succeq G^{-1}$.
\end{proposition}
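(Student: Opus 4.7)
The plan is to unfold the definition of the generalized inverse and exploit a direct set-inclusion argument. Recall from the paper that $F^{-1}(p) = \inf\{x : F(x) \geq p\}$, and similarly for $G^{-1}(p)$. So for any fixed $p \in (0,1)$, I would write
$$F^{-1}(p) = \inf A_p, \qquad G^{-1}(p) = \inf B_p,$$
where $A_p := \{x : F(x) \geq p\}$ and $B_p := \{x : G(x) \geq p\}$.

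The key observation is that $F \preceq G$ (i.e., $F(x) \leq G(x)$ for all $x$) implies $A_p \subseteq B_p$: if $x \in A_p$, then $G(x) \geq F(x) \geq p$, so $x \in B_p$. Taking infima reverses inclusion, giving $\inf A_p \geq \inf B_p$, i.e., $F^{-1}(p) \geq G^{-1}(p)$. Since this holds for every $p$, we conclude $F^{-1} \succeq G^{-1}$.

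There is no real obstacle; the argument is essentially immediate from the definitions. The only minor bookkeeping concern is the boundary convention when $A_p$ is empty (take $\inf \emptyset = +\infty$), in which case the claimed inequality holds trivially, and dually when $B_p = \mathbb{R}$ (take $\inf B_p = -\infty$). These conventions preserve the direction of the inequality, so no extra case analysis is needed. This proposition then chains with the definitions of $\hat R^{\pm}_{\psi,n}$ built from $\hat F_{L,n}$ and $\hat F_{U,n}$ via Def.~\ref{def:quantile-risk-measure} to justify the orientation used in the confidence-width construction for Thm.~\ref{thm:quantile}.
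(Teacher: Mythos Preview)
Your proposal is correct and follows essentially the same approach as the paper's own proof: both unfold the generalized inverse as an infimum over a sublevel-type set, use the pointwise inequality $F\leq G$ to get the set inclusion $\{x:F(x)\geq p\}\subseteq\{x:G(x)\geq p\}$, and conclude by reversing the inequality under infima. (In fact, the paper's printed proof contains a small typo in the direction of the set inclusion; your version states it correctly.)
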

\begin{proof}
Recall the generalized inverse of a CDF \(H\) is
\[
H^{-1}(p)=\inf\{x\in\mathbb R: H(x)\ge p\},\quad p\in(0,1).
\]
Since \(F(x)\le G(x)\) for all \(x\), we have $\{x: G(x)\ge u\}\;\subseteq\;\{x: F(x)\ge u\}$, so
\[
F^{-1}(u)
=\inf\{x: F(x)\ge u\}
\;\ge\;
\inf\{x: G(x)\ge u\}
=G^{-1}(u).
\]
Hence \(F^{-1}\succeq G^{-1}\).
\end{proof}

Applying a weighting function does not change this relationship. Hence,

$$
\mathbb{P}\left(\hat R^-_{\psi,n}\psi(\hat\lambda_{t-1},\lambda',\delta') \leq R_\psi(\hat\lambda_{t-1},\lambda') \leq \hat R^+_{\psi,n}(\hat\lambda_{t-1},\lambda',\delta') \right) \geq 1-\delta'
$$

as desired.

\begin{proof}[Proof of Theorem~\ref{thm:quantile}]
This follows as a consequence of the proofs in App.~\ref{app:main-proofs}.
\end{proof}

\section{Experiments}
\label{app:exp}

In this section, we further detail our experimental setup and include other supporting results. Our credit scoring simulations were conducted on a CPU and can be finished in less than one hour on a standard laptop computer.

\subsection{Type II Error Control for Credit Scoring}
\label{app:cs-ev}

\paragraph{Set up and $\gamma$-sensitivity} Our data is sourced from the training split of a Kaggle credit scoring dataset \citep{GiveMeSomeCredit}, which contains roughly $150k$ data points, $10k$ of which ending up in a serious delinquency ($y=1$). We first sample $1.5k$ from each class to train a logistic classifier to function as our model $f(x)$. Because the rest of this dataset is so highly imbalanced, the expected loss would be close to $0$. For illustration purposes, we sample $8.5k$ instances from the negative class so that the proportion of positive and negative classes are balanced. Then, for each trajectory, we sample $n$ points without replacement to serve as the calibration set, with the remaining serving as the validation set on which the risk is evaluated on. To generate multiple trajectories, we sample different calibration and validation sets from the balanced subset.

We now prove that our simulated shift is $(\gamma,1,\ell(\cdot,\lambda))$-sensitive for all $\lambda$. Recall that our shift is defined as 
$$f(x(\lambda))=\begin{cases}
\max(0,f(x)-s) & \text{if } f(x)-s\leq 1-\lambda \\
f(x) & \text{otherwise}.
\end{cases}$$
where $s(=0.3)$ is an arbitrary  parameter we define.

Consider $\lambda_1$ and $\lambda_2$ so that $1-\lambda_1\leq1-\lambda_2\leq1-\lambda_1+s$. Then, $f(x(\lambda_1)$ and $f(x(\lambda_2))$ are equivalent except in the narrow region $[1-\lambda_1+s,1-\lambda_2+s]$ (here, we assume we start with the same $x$ and that the region is inside $[0,1]$). Let $p$ be the probability of a positive case, and let $C$ be the maximum value of the PDF of predictions $f(x)$ on the base distribution given $y=1$. It is possible that the threshold $\lambda$ causes a discrepancy of $1$ in the loss for these data points in the population by flipping the prediction, e.g., $\lambda=\lambda_1$. Hence, $W_1(\cD_{\ell(\cdot,\lambda)}(\lambda_1), \cD_{\ell(\cdot,\lambda)})\leq pC|\lambda_1-\lambda_2|$ for all $\lambda$.

\begin{figure}
    \centering
    \includegraphics[width=0.5\linewidth]{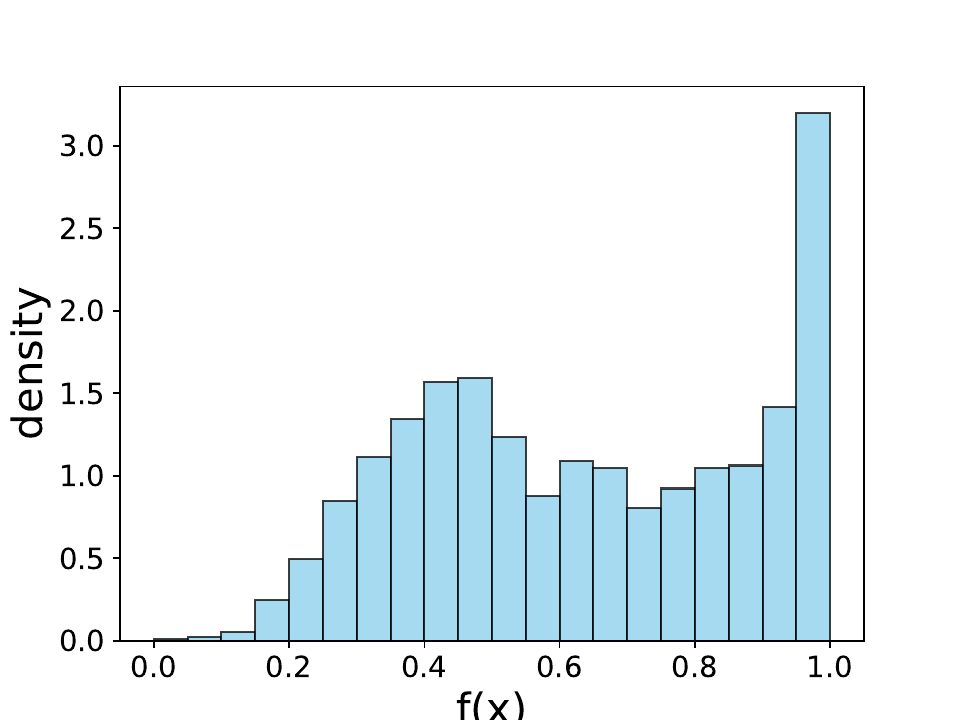}
    \caption{Histogram of predictions $f(x)$ on the base distribution.}
    \label{fig:density-plot}
\end{figure}

Now consider when $\lambda_1+s<\lambda_2$. If we use a threshold $\lambda$ so that $\lambda_1+s\leq \lambda \leq \lambda_2$, the original loss distribution and that induced by $\lambda_1$ are the same. However, for the distribution induced by $\lambda_2$, the original predictions falling in the region $[1-\lambda,1-\lambda+s]$ flip from $0$ to $1$. Similar to above, this implies that $W_1(\cD_{\ell(\cdot,\lambda)}(\lambda_1), \cD_{\ell(\cdot,\lambda)})\leq pCs$ for all $\lambda$. 

Combining the two cases above, we have $W_1(\cD_{\ell(\cdot,\lambda)}(\lambda_1), \cD_{\ell(\cdot,\lambda)})\leq pC\min(s,|\lambda_1-\lambda_2|)\leq pC|\lambda_1-\lambda_2|$ for all $\lambda$, achieving $\gamma$-sensitivity with $\gamma=pC$. We experimentally verify the values of $C$ and $p$ to be $3.20$ and $0.432$, respectively. Importantly, we calculate $C$ using the balanced subset rather than the training set. We use $20$ bins for the histogram (see Fig.~\ref{fig:density-plot}). Hence, $\gamma=pC\leq 1.38$.

\paragraph{Further experimental discussion}

Here, we include supporting plots to Fig.~\ref{fig:credit-scoring-expected-risk} and further discuss our experimental results. Fig.~\ref{fig:failure-prob} shows the proportion of runs with $\alpha-\Delta\alpha\leq R(\hat\lambda_T)\leq\alpha$, and Fig.~\ref{fig:loss_vs_iteration-all-rest-taus} displays similar plots to the bottom left plot of Fig.~\ref{fig:credit-scoring-expected-risk} for different values of $\tau$.

Fig.~\ref{fig:failure-prob} plots the failure probability for different values of $\tau$, defined as the proportion of trajectories where the final $\hat\lambda_T$ does not satisfy $\alpha-\Delta\alpha \leq R(\hat\lambda_T)\leq\alpha$. For $\tau=1$ and $\tau=2$ close to $\gamma\leq1.38$, the failure probability is essentially zero (much less than $\delta$), which some may argue constitutes loose coverage. While it is true that PRC's tightness guarantees are looser than those prevalent in the risk control literature, if we consider our unique setting of applying risk control under performative shift, the guarantees are actually quite tight, driven by three main factors. One, our algorithm needs to protect \textit{every iteration} for unknown functions $R(\lambda)$. The risk could be adversarial and necessitate many iterations, all of which PRC needs to protect. Two, our calculation of $\gamma$ is a uniform bound across the iterations, necessitating a larger safety parameter $\tau$; for example, it is possible that the ``true" $\gamma$ is between $\tau=0.2$ and $\tau=0.5$, at which point the empirical failure rate crosses $\delta$. With the other parameters such as the number of samples $n$ held constant, a larger $\tau$ corresponds to a larger $\Delta\alpha$. Finally, we do not assume ``insider" access to the statistics of the first or future distributions. We plan for the worst-case in terms of sample mean and sample variance in computing the confidence widths. In light of these three factors intrinsic to the performative setting, our bounds are indeed quite tight.

\begin{figure}
    \centering
    \includegraphics[width=0.5\linewidth]{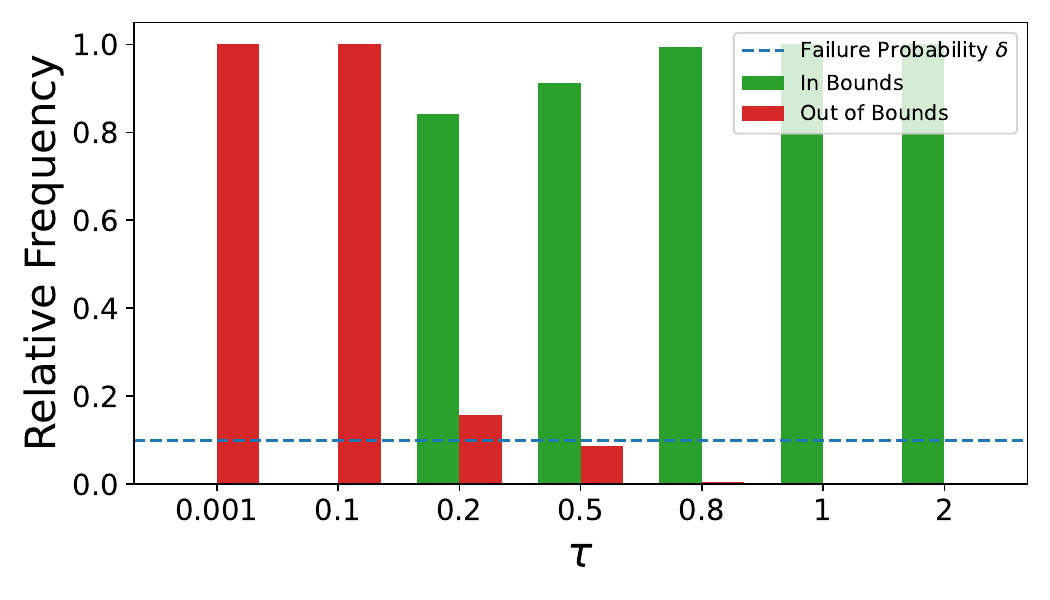}
    \caption{The relative frequency of runs that end with a $\hat\lambda_T$ with $\alpha-\Delta\alpha\leq R(\hat\lambda_T)\leq\alpha$.}
    \label{fig:failure-prob}
\end{figure}

\begin{figure}
   \centering
    \begin{subfigure}[b]{0.5\textwidth}
        \includegraphics[width=\textwidth]{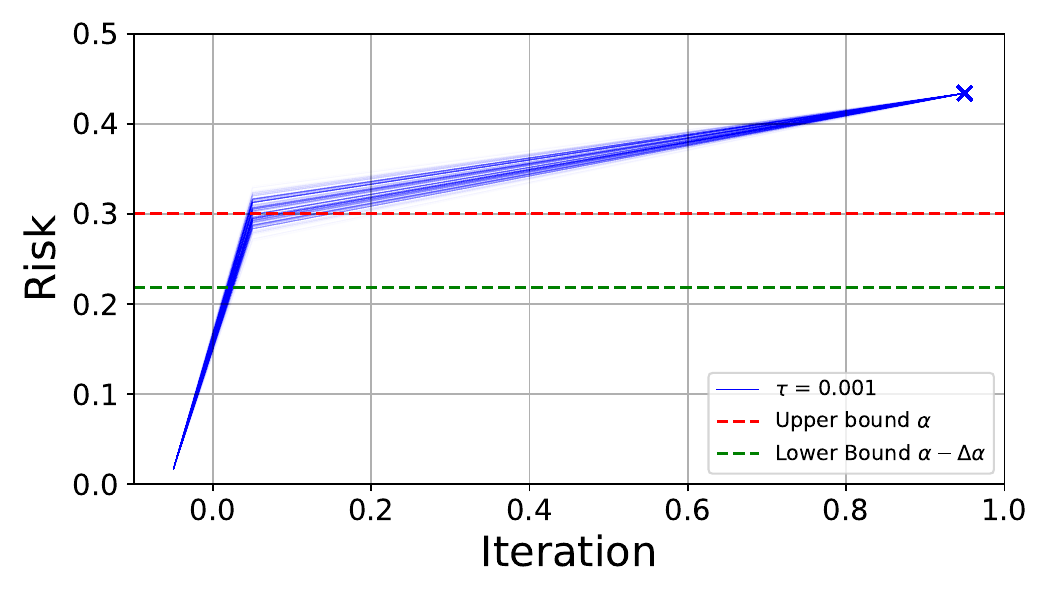}
    \end{subfigure}%
    \hfill
    \begin{subfigure}[b]{0.5\textwidth}
        \includegraphics[width=\textwidth]{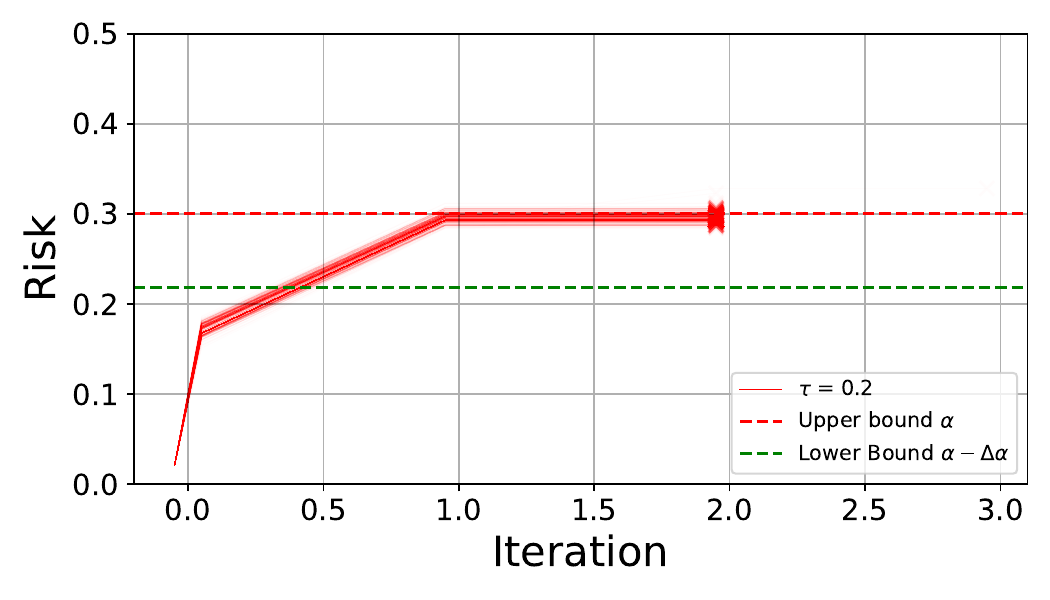}
    \end{subfigure}%

    \vspace{\baselineskip} 

    \begin{subfigure}[b]{0.5\textwidth}
        \includegraphics[width=\textwidth]{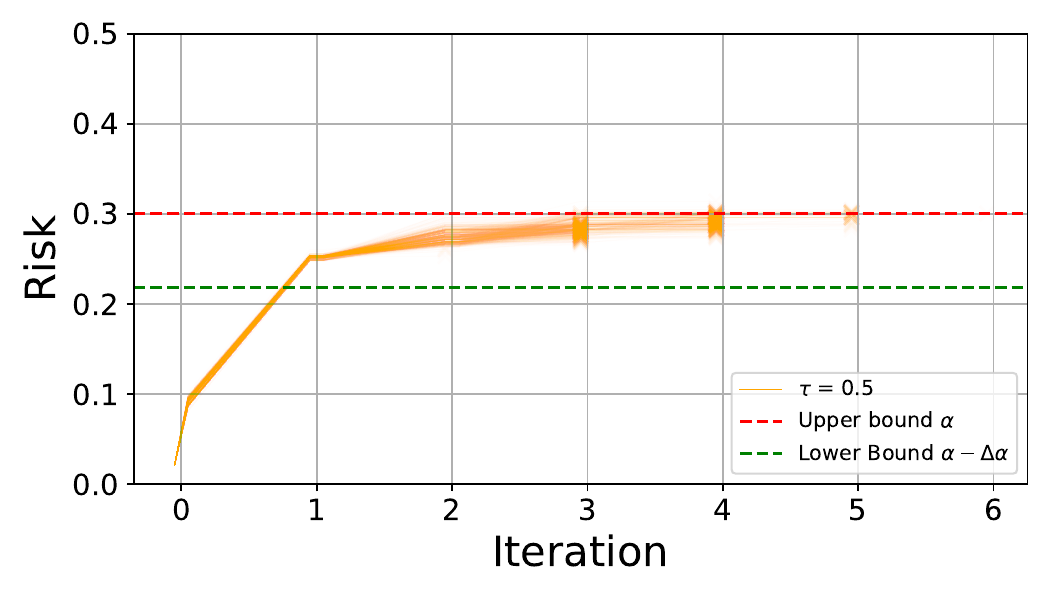}
    \end{subfigure}%
    \hfill
    \begin{subfigure}[b]{0.5\textwidth}
        \includegraphics[width=\textwidth]{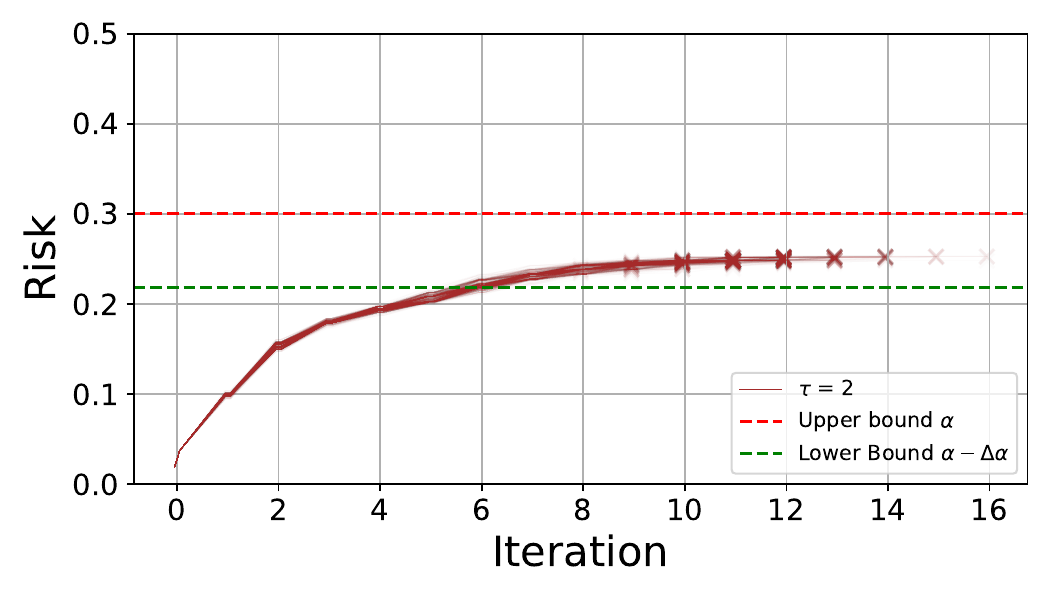}
    \end{subfigure}
    \caption{Analog of the bottom left plot in Fig.~\ref{fig:credit-scoring-expected-risk} for $\tau\in\{0.001,0.2,0.5,2\}$.}
    \label{fig:loss_vs_iteration-all-rest-taus}
\end{figure}

\subsection{Quantile Risk Control in Credit Scoring}
\label{app:cs-cvar}

\paragraph{Set up, $\gamma$-sensitivity, and the confidence width}

For this experiment, we adopt the same setting as in expected risk and use the same shift technique. However, after training the logistic classifier, \textit{we avoid any subsampling,} using the rest of the dataset. Hence, while $C$ remains the same, $p$ drops from $0.432$ to $0.064$, reducing $\gamma$ to $\gamma=pC\leq0.205$.

To calculate the confidence width, we follow a recipe based on a version of CLT that is applicable to quantile-based risk measures. The recipe calls for knowledge of the loss CDF. Recall that our loss is given by $\ell(y, \cT_\lambda(x))=yU[0,1](1-\cT_{\lambda,\epsilon}(x))$; we make a minor approximation to this loss function by taking $\epsilon\rightarrow 0$, so that the loss is binary. Denote by $p'$ the proportion of the population that has $y(1-\cT_{\lambda,0}(x))=1$. Because at most $p$ of the population has $y=1$, we know that $p'\leq p$, which holds for any given distribution-inducing $\lambda$ and threshold $\lambda'$. Knowing $p'$ allows us to derive the CDF and its inverse, which are piecewise linear because of the uniform distribution.

\begin{align*}
F(w) &= \begin{cases}
0 & \text{when } w < 0 \\
1-p'+p'w & \text{when } 0 \le w \le 1 \\
1 & \text{when } w > 1
\end{cases} \\
F^{-1}(r) &= \begin{cases}
0 & \text{when } 0 < r \le 1-p \\
1-\frac{1-r}{p} & \text{when } 1-p < r \le 1
\end{cases}
\end{align*}

By Theorem 22.3 in \citet{Vaart_1998}, we have
$$
\sqrt{n}(R_\psi(\hat F_n) - R_\psi(F)) \to_D N(0, \sigma^2(\psi, F)),
$$
where
$$
\sigma^2(\psi, F) = \iint (F(r \wedge \tilde r) - F(r)F(\tilde r)) \psi(F(r))\psi(F(\tilde r)) \,dr\,d\tilde r.
$$

Since the $\beta$-CVaR metric is bounded everywhere and has one discontinuous point, we can apply this theorem (appealing to dominated convergence). We split the calculation of $\sigma^2(\psi, F)$ into two cases.

\paragraph{Case I: $\beta\leq1-p'$}

\begin{align*}
\sigma^2(\psi,F) &= \frac{1}{(1-\beta)^2} \int_0^\infty \int_0^\infty F(r\wedge \tilde r) - F(r)F(\tilde r) dr d\tilde r\\
&=\frac{1}{(1-\beta)^2} \int_0^1 \int_0^1 F(r\wedge \tilde r) - F(r)F(\tilde r) dr d\tilde r
\end{align*}

Using $F(r)=1-p'+p'r$ gives us
$$
\sigma^2(\psi,F) = \frac{1}{(1-\beta)^2} \frac{1}{12}(4-3p')p'
$$

\paragraph{Case II: $\beta>1-p'$}

We require $F(r)=1-p'+p'r \geq \beta$, or $r \geq (\beta-1+p)/p$ for the weighting function to be nonzero. Similarly to above, the upper bound can be changed from $\infty$ to $1$ because $F(r)=1$ for $r\geq 1$. The lower bound is then changed to $a=1-\frac{1-\beta}{p}$.
\begin{align*}
\sigma^2(\psi,F) &= \frac{1}{(1-\beta)^2} \int_a^1 \int_a^1 F(r\wedge \tilde r) - F(r)F(\tilde r) dr d\tilde r \\
&= \frac{2}{(1-\beta)^2} \int_a^1 \int_r^1 F(r) - F(r) F(\tilde r) dr d\tilde r\\
&= \frac{2}{(1-\beta)^2} \int_a^1 \int_r^1 F(r) [1 - F(\tilde r) ] dr d\tilde r\\
&=\frac{2}{(1-\beta)^2} \int_a^1 (1-p'+p'r)(\frac{1}{2}p' -p'r + \frac{1}{2}p'r^2) dr \\
&= \frac{1-\beta}{3\beta+1}{12p'^2}
\end{align*}

We examine $\beta=0.9$ ($90\%$-CVaR) and when the range of $p'$ is $[0,p]$, where $\beta\leq1-p$. Taking the maximum $\sigma^2(\psi,F)$, which occurs precisely when $p'=p$, we obtain the variance, which allows us to construct the following confidence width that applies for any choice of shift-inducing $\lambda$ and threshold $\lambda'$:
$$
c(n,\delta')=\Psi^{-1}(1-\delta'/2) \frac{1}{1-\beta}\sqrt{\frac{(4-3p)p}{12n}}.
$$

\paragraph{Further experimental results}

Here, we include supporting plots to Fig.~\ref{fig:credit-scoring-cvar}. Fig.~\ref{fig:failure-prob-quantile} shows the proportion of runs with $\alpha-\Delta\alpha\leq R(\hat\lambda_T)\leq\alpha$, Fig.~\ref{fig:lambda-plots-quantile} shows sample trajectories for different values of $\tau$, analogous to the top two plots of Fig.~\ref{fig:credit-scoring-expected-risk}, and Fig.~\ref{fig:loss_vs_iteration-all-rest-taus-quantile} displays the risk trajectory for different selections of $\tau$.

\begin{figure}
    \centering
    \includegraphics[width=0.5\linewidth]{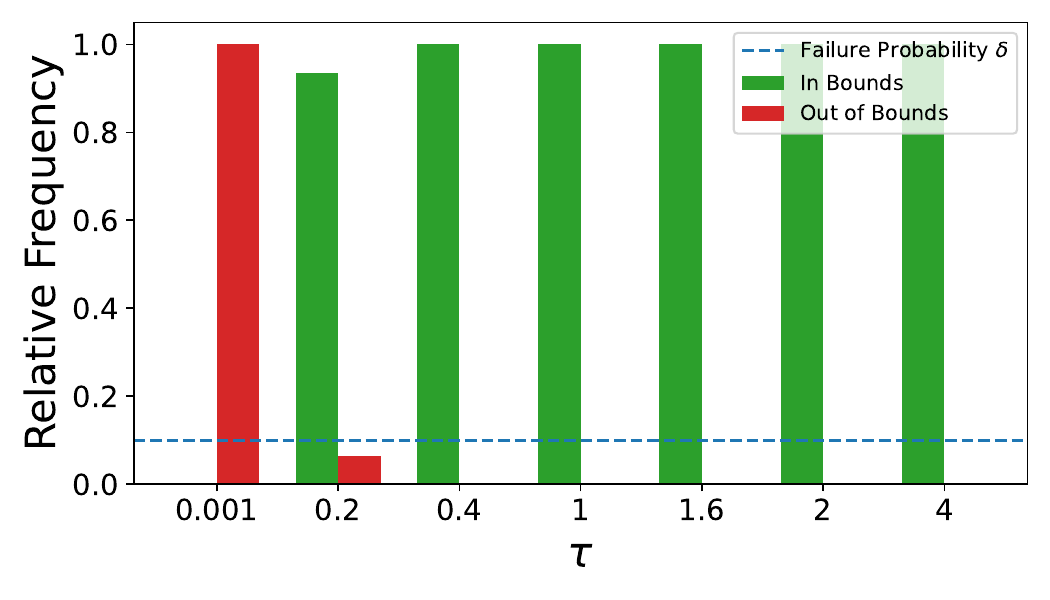}
    \caption{The relative frequency of runs that end with a $\hat\lambda_T$ with $\alpha-\Delta\alpha\leq R(\hat\lambda_T)\leq\alpha$.}
    \label{fig:failure-prob-quantile}
\end{figure}

\begin{figure}
   \centering
    \begin{subfigure}[b]{0.5\textwidth}
        \includegraphics[width=\textwidth]{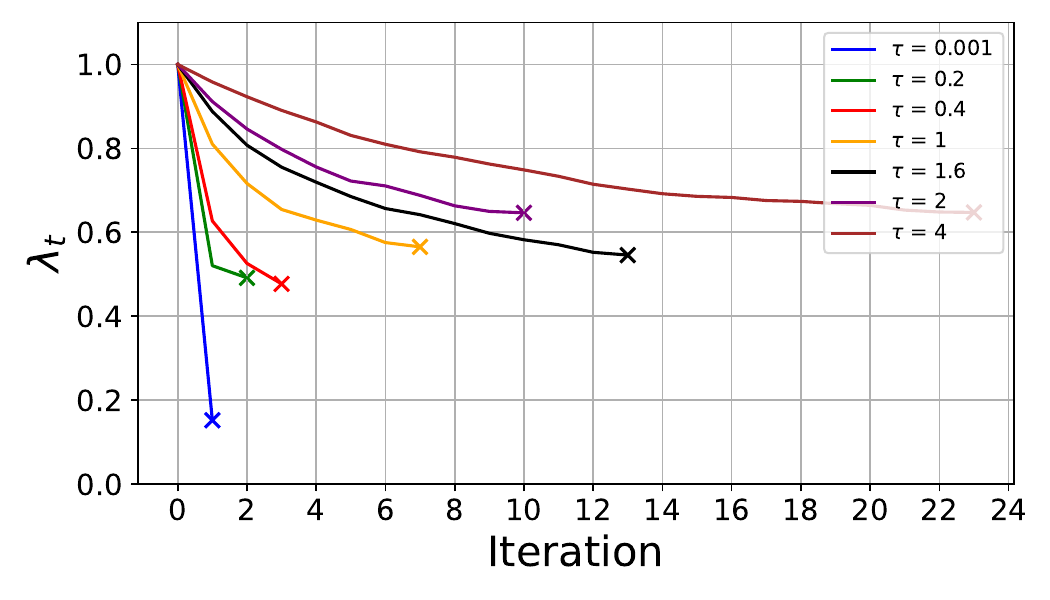}
    \end{subfigure}%
    \hfill
    \begin{subfigure}[b]{0.5\textwidth}
        \includegraphics[width=\textwidth]{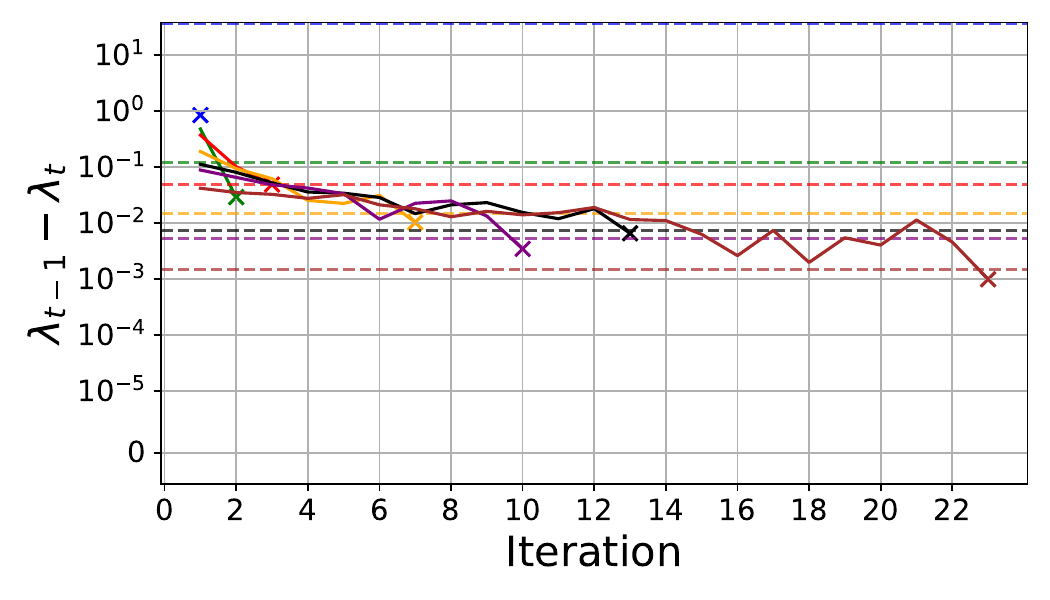}
    \end{subfigure}%
    \caption{Analog of the bottom left plot in Fig.~\ref{fig:credit-scoring-expected-risk} for $\tau\in\{0.001,0.2,0.4,1,1.6,2,4\}$.}
    \label{fig:lambda-plots-quantile}
\end{figure}

\begin{figure}
   \centering
    \begin{subfigure}[b]{0.5\textwidth}
        \includegraphics[width=\textwidth]{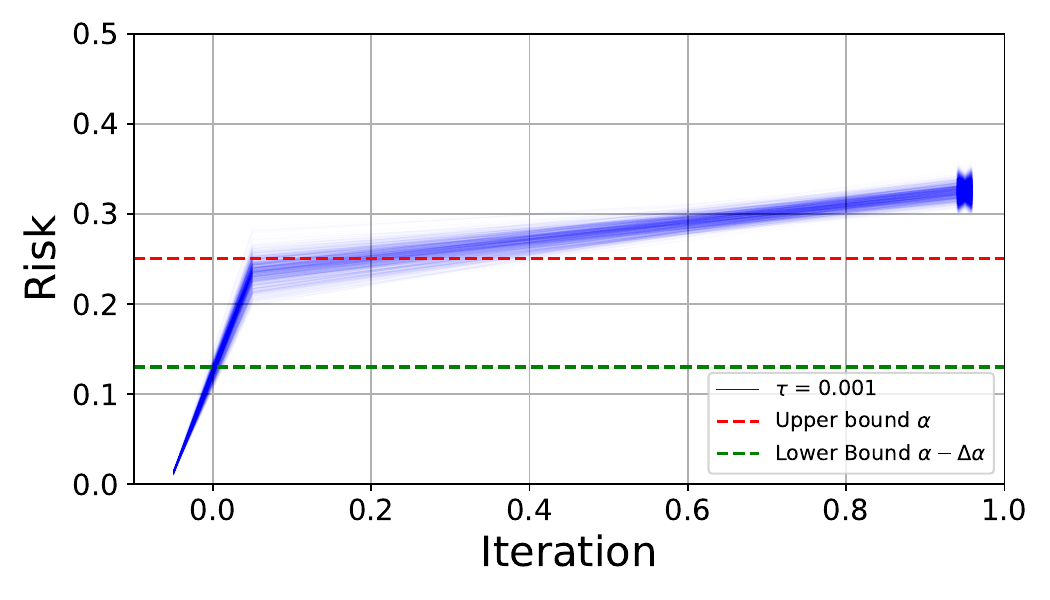}
    \end{subfigure}%
    \hfill
    \begin{subfigure}[b]{0.5\textwidth}
        \includegraphics[width=\textwidth]{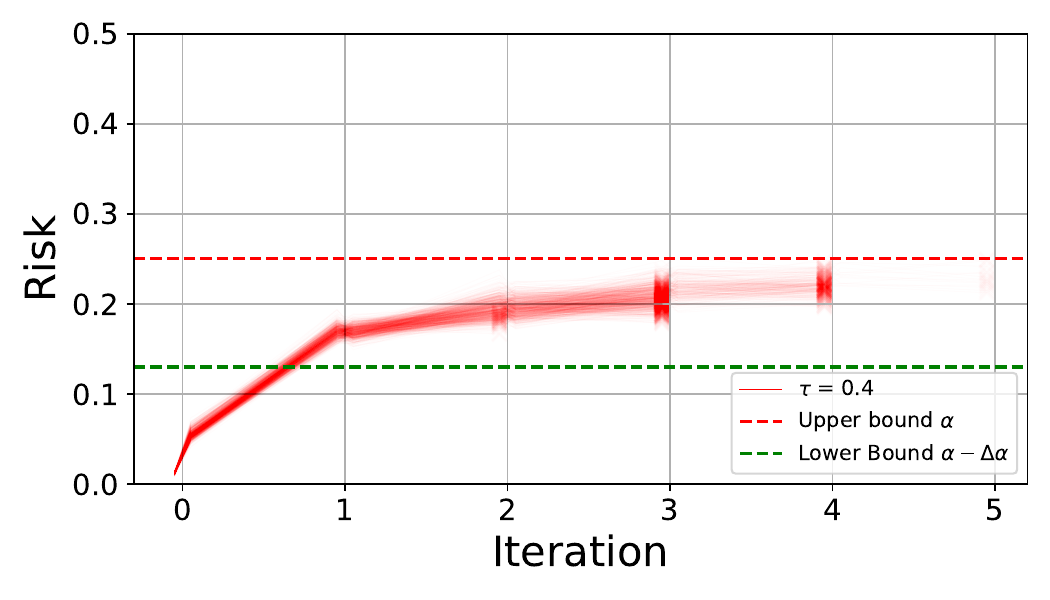}
    \end{subfigure}%

    \vspace{\baselineskip} 

    \begin{subfigure}[b]{0.5\textwidth}
        \includegraphics[width=\textwidth]{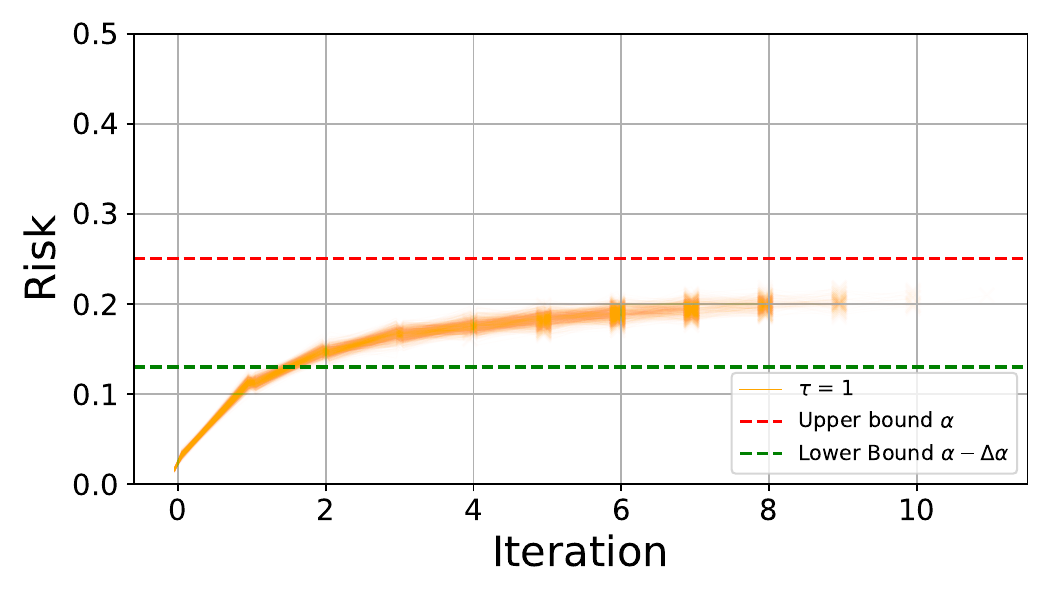}
    \end{subfigure}%
    \hfill
    \begin{subfigure}[b]{0.5\textwidth}
        \includegraphics[width=\textwidth]{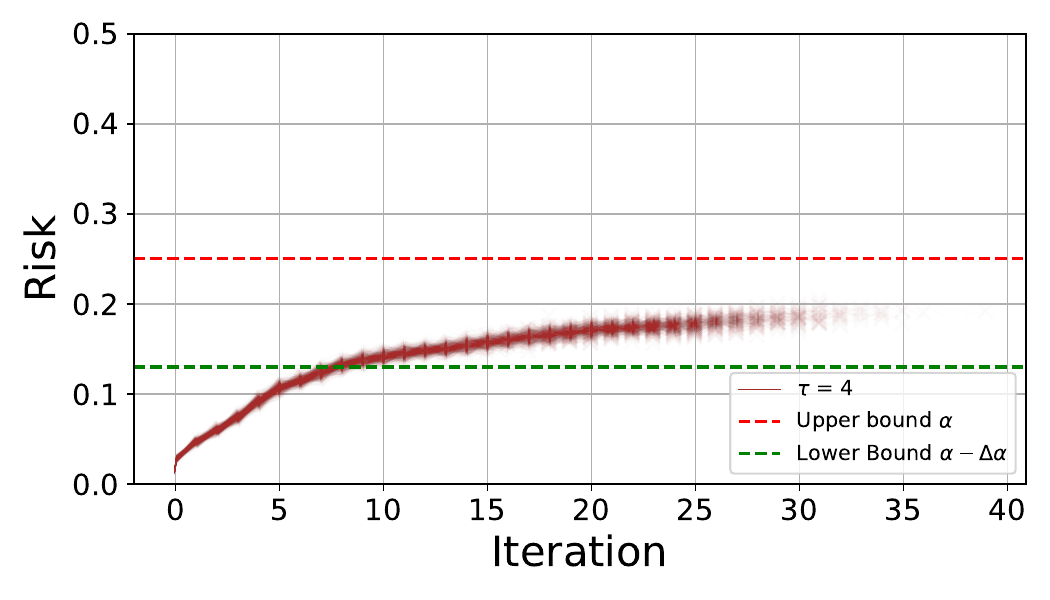}
    \end{subfigure}
    \caption{Analog of the bottom left plot in Fig.~\ref{fig:credit-scoring-expected-risk} for the $90\%$-CVaR risk measure and $\tau\in\{0.001,0.4,1,4\}$.}
    \label{fig:loss_vs_iteration-all-rest-taus-quantile}
\end{figure}

\end{document}